\newcommand{\X}{\mathcal{X}}
\newcommand{\Hs}{\mathcal{H}}
\newcommand{\I}{\mathbb{I}}
\newcommand{\CalL}{\mathbb{L}}
\newcommand{\A}{\mathbb{A}}
\newcommand{\Real}{\mathbb{R}}
\newcommand{\EXP}{\mathbb{E}}
\newcommand{\norm}[1]{\left\lVert#1\right\rVert}
\newcommand{\innerpro}[1]{\left\langle#1\right\rangle}
\newcommand{\tG}{\widetilde{G}}
\newcommand{\tg}{\widetilde{g}}
\newcommand{\abs}[1]{\left|#1\right|}
\newcommand{\blam}{\bm{\lambda}}
\begin{document}

\markboth{E. Gizewski, M. Holzleitner, L. Mayer-Suess, S. Pereverzyev Jr., S. Pereverzyev}{Multiparameter regularization and aggregation in the context of polynomial functional regression}

\catchline{}{}{}{}{}

\title{Multiparameter regularization and aggregation in the context of polynomial functional regression}

\author{Elke R. Gizewski}
\address{Department of Radiology, Medical University of Innsbruck, Anichstrasse 35 \\ 6020, Innsbruck, Austria \\
Neuroimaging Research Core Facility, Medical University of Innsbruck, Anichstrasse 35\\ 6020, Innsbruck, Austria\\
\email{elke.gizewski@i-med.ac.at}}

\author{Markus Holzleitner\footnote{Corresponding author}}
\address{Institute for Machine Learning, Johannes Kepler University, Altenberger Straße 69 \\4040, Linz, Austria\\
\email{holzleitner@ml.jku.at}}

\author{Lukas Mayer-Suess}
\address{Department of Neurology, Medical University of Innsbruck, Anichstrasse 35 \\ 6020, Innsbruck, Austria\\
\email{lukas.mayer@i-med.ac.at}}

\author{Sergiy Pereverzyev Jr. }
\address{Department of Radiology, Medical University of Innsbruck, Anichstrasse 35 \\ 6020, Innsbruck, Austria \\
Neuroimaging Research Core Facility, Medical University of Innsbruck, Anichstrasse 35\\ 6020, Innsbruck, Austria\\
\email{sergiy.pereverzyev@i-med.ac.at}}

\author{Sergei V. Pereverzyev}
\address{Johann Radon Institute for Computational and Applied Mathematics, Austrian Academy of Sciences, Altenberger Straße 69\\ 4040, Linz, Austria\\
\email{sergei.pereverzyev@oeaw.ac.at}}

\maketitle


\begin{abstract}
Most of the recent results in polynomial functional regression have been focused on an in-depth exploration of single-parameter regularization schemes. In contrast, in this study we go beyond that framework  by introducing an algorithm for multiple parameter regularization and presenting a theoretically grounded method for dealing with the associated parameters. This method facilitates the aggregation of models with varying regularization parameters. The efficacy of the proposed approach is assessed through evaluations on both synthetic and some real-world medical data, revealing promising results.
\end{abstract}

\keywords{
Statistical learning theory; Functional polynomial regression ; Multiparameter regularization; Aggregation}
\ccode{Mathematics Subject Classification 2020: 65K10,62G20}


\section{Introduction} \label{sec:intro}

Functional data analysis has emerged as a vibrant and dynamic research area and is present in various aspects of our daily lives, such as climate studies, medicine, economics, and healthcare, just to name a few. Typically, functional data appear in the forms
of time series, shapes, images, and analogous objects. While the term "functional data analysis" was first used in \cite{ramsay1982data,ramsay1991some}, significant advancements have happened since then. For a comprehensive exploration of methods, theory, and applications, we refer to seminal review articles like  \cite{ramsay2002applied, wang2016functional, kokoszka2017introduction, reiss2017methods}, and also to the very recently appeared special issue \cite{aneiros2022functional}.

This work focuses specifically on functional data inputs that are labeled with scalar-valued outputs. One of the most extensively studied methods in this context assumes a linear relationship between inputs and outputs, so that the outputs can be represented as linear functionals of the (functional) inputs, accompanied maybe by an additional noise term. One popular approach to capture linear functional regression is based on reproducing kernel Hilbert space (RKHS) techniques, so that the known arguments from kernel regression (see e.g. \cite{caponnetto2007optimal, guo2017learning, lu2020balancing, lin2017distributed}) can be used. A by no means complete list of works in this direction can be found in \cite{tong2018analysis, tong2021distributed,yuan2010reproducing} and references therein. We may also mention \cite{holzleitner2023domain}, where linear functional regression approach is proposed in a more sophisticated setting of domain generalization.

Similarly to the case of extending standard linear regression by allowing polynomial interactions, polynomial functional regression (PFR), which includes the functional linear model and functional quadratic model as two special cases,  was proposed in \cite{muller2010quadratic}.
Then it has been discussed in \cite{tao2014polynomial, reiss2017methods} and recently in \cite{holzleitner2023regularized}, where a complete treatment of the interplay between smoothness, capacity and general one parameter regularization schemes is provided (as done e.g. for standard kernel regression in \cite{lu2020balancing, guo2017learning}). In particular, the study \cite{holzleitner2023regularized} has advocated the use of iterated one-parameter Tikhonov regularization method in the context of PFR.

One drawback of using single-parameter (iterated) Tikhonov regularization is, that all norms of the individual monomials in the regularization term are given equal weight, and therefore the advantage of using higher order monomials is not developed to its fullest potential. Consequently, it is advisable to introduce specific weight parameters for each individual monomial, and we envision it as a good place to advertise multi-parameter regularization in this context. 

In general, multi-parameter (MP) regularization schemes have a rich history, both in terms of theory and applications, and we refer to \cite{lu2013regularization}[Chapter 3] and references therein for a comprehensive summary. It is interesting to note that the usage of multiple parameters has been judged variously by different authors. Just to give two examples: in \cite{xu2006multiple}, the authors found, that it provides only marginal improvements, whereas in \cite{belkin2006manifold} it is claimed that MP-regularization helps significantly, when the one parameter counterparts do not lead to satisfying results. One main finding of our work is, that in the case of PFR, we are in a similar situation as in \cite{belkin2006manifold}, and one can demonstrate the advantage of using multi-parameter PFR in numerical examples based on synthetic toy data and on some real-world medical data. 

At the same time, there is a common belief that the choice of the regularization parameters is crucial, and we are only aware of a few works that tackle this serious challenge in the MP case: a heuristic L-curve based strategy is proposed in \cite{belge2002efficient}, in \cite{chen2008multi, bauer2006utilization, bauer2007regularization} knowledge of the noise structure is required and in \cite{lu2011multi} an approach based on the discrepancy principle is discussed, which is costly to compute. 

The solution that we will propose in the specific setting of PFR, is based on the so-called aggregation by the linear functional strategy,
which may be traced back to \cite{chen2015aggregation} (see also Section 3.5 \cite{pereverzyev2022introduction} and references therein).
In the context of standard scalar and vector valued regression such type of aggregation has recently lead to successful performances in domain adaptation, a field that in many aspects is very sensitive to parameter selection as well, see e.g. \cite{dinu2023addressing, gizewski2022regularization}. 
However, we are not aware of any works that employ the aggregation techniques in the context of functional data with MP regularization yet, and thus another main part of our study is to provide theoretical and numerical evidence, that aggregation can be successfully applied in these settings as well. 

The main findings of this work can therefore be summarized as follows:
\begin{itemize}
\item We introduce multi-parameter regularization in the context of PFR and derive a linear system that allows us to compute the corresponding solutions.
\item In order to deal with turning of multiple 
regularization parameters, we propose an aggregation procedure 
in the context of PFR.
\item We provide numerical evidence, that MP regularization and aggregation can be useful concepts for PFR also in practice, on the one hand on synthetic data, and on the other hand, on data from a medical application, where the task is to detect stenosis in brain arteries.
\end{itemize}

Our work will now be structured as follows. In Section 2, we will recall the setting of regularized PFR, by repeating the definitions, assumptions and estimates from \cite{holzleitner2023regularized}. In Section 3, an algorithm how to compute the solution associated to MP-PFR is discussed, whereas Section 4 proposes an aggregation strategy for PFR, which can be evaluated numerically and which comes with additional theoretical guarantees. Section 5 is then devoted to the experiments on synthetic and real-world medical data.
\section{Setting}

\subsection{Overall setting and assumptions} \label{subsec:notation}

Let $\mathbb{I} \subset \mathbb{R}^d$ and consider the associated space $L^2\left(\mathbb{I}\right)$ consisting of square integrable functions with respect to the Lebesgue measure $\mu$, so that
\begin{align*}
\norm{u}^2_{L^2(\mathbb{I})}=\int_{\mathbb{I}} |u(t)|^2 d \mu(t).
\end{align*}
Moreover, let $L^2(\Omega, \mathbb{P})$ be a space of random variables $Y=Y(\omega)$ defined on a probability space $(\Omega, \mathcal{F},\mathbb{P})$, $\omega \in \Omega$, with bounded second moments, 
so that
\begin{align*}
\norm{Y}^2_{L^2(\Omega, \mathbb{P})}:= \mathbb{E}|Y|^2 = \int_{\Omega} |Y(\omega)|^2 d\mathbb{P}(\omega).
\end{align*}
Consider also the tensor product $L^2(\Omega, \mathbb{P}) \otimes L^2\left(\mathbb{I}\right)$, 
which is nothing but a collection of random variables $X(\omega,s)$ indexed by points $s \in \mathbb{I}$ and having bounded second moments in the following sense:
\begin{align*} 
\norm{X}_{\mathbb{P}, \mu}^2:=\mathbb{E} \norm{X(\omega,\cdot )}^2_{L^2(\mathbb{I})}.
\end{align*}
The inner products in the considered Hilbert spaces $\Hs$ will always be denoted by $\innerpro{.,.}_{\Hs}$, and the space is indicated by a subscript.

Functional data consist of random i.i.d. samples of functions $X_1(s),...,X_N(s)$, that can be seen as realizations of a stochastic process $X(\omega,s) \in L^2(\Omega, \mathbb{P}) \otimes L^2\left(\mathbb{I}\right)$. Now let us discuss the setting of polynomial functional regression (PFR): Let $Y \in L^2(\Omega, \mathbb{P}) $ be a scalar response, and $X \in L^2(\Omega, \mathbb{P}) \otimes L^2\left(\mathbb{I}\right)$ be the corresponding functional predictor. We make the following assumption on $X$ (as imposed in a similar way, e.g., in \cite{yuan2010reproducing,tong2018analysis}):
\begin{assumption} \label{ass:unif}
\begin{align*}  
\sup_{\omega \in \Omega} \norm{X(\omega,\cdot )}_{L^2(\I)} \le \kappa.
\end{align*}
\end{assumption}
In PFR one aims at minimizing the expected prediction risk: 
\begin{align} \label{eq:pftfr}
    \mathcal{E}(U_p(X))=\mathbb{E} \left( |Y(\omega )-U_p(X(\omega,\cdot )) |^2 \right) \to \min,
\end{align}
where $U_p(X(\omega,\cdot ))$ is a polynomial regression of order $p$:
\begin{align*}
U_p(X(\omega, \cdot))=u_0+\sum_{l=1}^p \int_{\I^l} u_l(s_1,...,s_l) \prod_{j=1}^l  X(\omega,s_j) d\mu(s_j).
\end{align*}
Here $u_0 \in L^2_0 :=\Real$, and $u_l \in L^2_l$, where
\begin{align*}
L^2_l=\underbrace{L^2(\I) \otimes \cdots \otimes L^2(\I)}_{l\text{ -times}}.
\end{align*}
To proceed and formalize the setting further, consider the operator
\begin{align*}
A_0: \Real \to L^2(\Omega, \mathbb{P}) 
\end{align*}
assigning to any $u_0 \in \Real$ the corresponding constant random variable. Moreover, consider $A_l: L^2_l \to  L^2(\Omega, \mathbb{P}) $, such that
\begin{align} \label{eq:aldef}
(A_l u)(\omega)= \int_{\I^l} u_l(s_1,...,s_l) \prod_{j=1}^l  X(\omega,s_j) d\mu(s_j).
\end{align}
Let, also,  $\CalL^2=\bigoplus_{l=0}^p L^2_l$ be a direct sum of spaces $L^2_l$ consisting of finite sequences $u=(u_0,...,u_p)$, 
$u_l \in L^2_l$, $l=0,1,...,p$, equipped with the norm $\norm{u}_{\CalL^2}^2=\sum_{l=0}^p \norm{u_l}_{L^2_l}^2$, and consider the bounded linear operator (which is also a Hilbert-Schmidt one, as will be seen from Lemma \ref{lem:hs_bounds}) $\A:\CalL^2 \to L^2(\Omega, \mathbb{P})$, given by
\begin{align} \label{eq:CalApdef}
\A u=(A_0,A_1,...,A_p) \circ (u_0,u_1,...,u_p)=\sum_{l=0}^p
A_l u_l. 
\end{align}

Observe that for any $u \in  L^2(\Omega, \mathbb{P})$ the operator $A_l^*: L^2(\Omega, \mathbb{P}) \to L^2_l$ assigns to it the element
\begin{align*}
(A_l^* u)(s_1,...s_l)=\int_{\Omega} u(\omega) \prod_{i=1}^l  X(\omega,s_i) d\mathbb{P}(\omega),
\end{align*}
and therefore, $\A^* \A$ is a $(p+1) \times (p+1)$ matrix of the operators 
\begin{align*}
\A^* \A= \left\{ A_k^* A_l: L^2_l \to L^2_k, k,l=0,1,...,p \right\}
\end{align*}
, where $A_0^{*} A_0u_0 =u_0$ and
\begin{align} 
A_0^{*} A_lu&=\int_{\Omega} \int_{\I^l} u(s_1,...,s_l) \prod_{i=1}^l X(\omega,s_i) d\mu(s_i) d \mathbb{P} (\omega), \nonumber\\
A_k^{*} A_lu(s_1,...,s_k)&=\int_{\Omega} \prod_{j=1}^k X(\omega,s_j) \int_{\I^l} u(\tilde{s}_1,...,\tilde{s}_l) \prod_{i=1}^l X(\omega,\tilde{s}_i) d\mu(\tilde{s}_i)  d \mathbb{P} (\omega),  \nonumber\\
k,l&=1,...,p.  \nonumber 
\end{align}

Equipped with this notation, we can write that
$U_p(X(\omega, \cdot))=\A u$, such that \eqref{eq:pftfr} is reduced to the least square solution of the equation $\A u=Y$, because $\mathcal{E}(U_p(X))=\norm{Y-\A u}^2_{L^2(\Omega, \mathbb{P})}$. Let us also use the following standard assumption:
\begin{assumption} \label{ass:projection}
The projection $\mathcal{P} Y$ of $Y$ on the closure of the range of $\A$ is such that $\mathcal{P} Y \in \text{Range}(\A)$.
\end{assumption}
It is well known (see, e.g.,\cite{lu2013regularization}[Proposition 2.1.]), that under Assumption \ref{ass:projection} the minimizer $u = u^+=(u_0^+,...,u_p^+)$ of \eqref{eq:pftfr} solves the normal equation 

\begin{align} \label{eq:normal}
\A^* \A u=\A^* Y.
\end{align}
Let us elaborate in more detail on the intuition and significance of this assumption in the context of our work.
\begin{remark}
In the theory of linear inverse problems, a distinction is made between solutions to the equation $\A u = Y$ and solutions to the normal equation \eqref{eq:normal}, which correspond to minimizers of the associated least squares functional $\norm{\A u - Y}_{L^2(\Omega, \mathbb{P})}$. When $Y$ belongs to the range of $\A$, every solution of the original equation is also a solution to \eqref{eq:normal}. Assumption \ref{ass:projection} guarantees that \eqref{eq:normal} remains solvable even in more general situations where $Y \notin \operatorname{Range}(\A)$, provided that $\mathcal{P} Y \in \operatorname{Range}(\A)$. This operator-based formulation allows us to exploit various probabilistic and functional analytic techniques, as discussed in the subsequent sections of the manuscript, which would otherwise not be applicable. For further details, we refer to \cite[Proposition 2.1, Remark 2.1]{lu2013regularization}. This assumption is therefore fundamental for our analysis, although relaxing it could represent an interesting avenue for future research.
\end{remark}

We will also use the fact that for any $u \in \CalL^2$
\begin{align} \label{eq:norm_comparisons1}
\norm{\A u}_{L^2(\Omega, \mathbb{P})}=\norm{\sqrt{\A^* \A} u}_{ \CalL^2},
\end{align}
which follows immediately from the polar decomposition of the operator $\A$.

For the further analysis let us also adopt the following response noise model:
\begin{assumption} \label{as:noise_model}
\begin{align} \label{eq:noise_model}
Y= \A u^+ + \varepsilon,
\end{align}
 where a noise variable $\varepsilon: \Omega \to \Real$ is independent from $X$, $\mathbb{E}(\varepsilon)=0$, and for some $\sigma>0$ it should satisfy either the condition
\begin{align} \label{eq:noise_1}
\mathbb{E}(|\varepsilon(\omega)|^2) \le \sigma^2,
\end{align}
 
 or obey, for any integer $\tilde{m} \geq 2$ and some $M>0$ , a slightly stronger moment condition, which is also standard in the literature, see e.g. \cite{tong2021distributed}, 
\begin{align} \label{eq:noise_2}
\mathbb{E}(|\varepsilon(\omega)|^{\tilde{m}}) \le  \frac12 \sigma^2 \tilde{m}! M^{\tilde{m}-2}.
\end{align}
\end{assumption}
However, the involved operators are inaccessible, because we do not know $\mathbb{P}$. Thus, we want to approximate them by using training data $(Y_i, X_i(\cdot))$, $i=1,...,N$, consisting of $N$ independent samples
of the response and the functional predictor $(Y(\omega), X(\omega, \cdot))$, so that
\begin{align*}
Y_i=\A_i u^+ +\varepsilon_i,
\end{align*}
where $\A_i$ is defined in the same way as $\A$ by the replacement of $X(\omega, \cdot)$
in the formulas \eqref{eq:aldef} and \eqref{eq:CalApdef} with $X_i(\cdot)$, and $\varepsilon_i$ is a sample from the noise variable introduced in Assumption \ref{as:noise_model}.

Moreover, $u^+$ does not depend continuously on the initial datum, such that we need to employ a regularization.

The simplest and arguably most well known regularization in this context is the single-parameter Tikhonov regularization, so for $\lambda>0$ we want to find the minimizer $u_{\lambda}$ of the regularized PFR
\begin{align} \label{eq:tikpftfr}
\norm{Y-\A u}^2_{L^2(\Omega, \mathbb{P})} + \lambda \norm{u}^2_{\mathbb{L}^2} \to \text{min},
\end{align} 
which solves the equation $\lambda u + \A^* \A u= \A^* Y$ and can be approximated by the solution $u_{\lambda}^N$ of 
\begin{align} \label{eq:tikpftfremp}
\lambda u + [\A^* \A]_N u= [\A^* Y]_N.
\end{align}
These approximations are given by $[\A^* \A]_N=\left\{ [A_k^* A_l]_N: L^2_l \to L^2_k, k,l=0,1,...,p \right\}$
so that:
\begin{align} 
[A_0^{*} A_0]_N u&=u,  \nonumber \\
[A_0^{*} A_l]_N u&=\frac1N \sum_{i=1}^N \int_{\I^l} u(s_1,...,s_l) \prod_{j=1}^l X_i(s_j) d\mu(s_j), \nonumber \\
[A_k^{*} A_l]_N u(s_1,...,s_k)&=\frac1N \sum_{i=1}^N \prod_{j=1}^k X_i(s_j) \int_{\I^l} u(\tilde{s}_1,...,\tilde{s}_l) \prod_{m=1}^l X_i(\tilde{s}_m) d\mu(\tilde{s_m}), \nonumber \\
k,l&=1,...,p. \label{eq:def_a*a_empirical}
\end{align}
and $[\A^* Y]_N=([A_0^* Y]_N,...,[A_p^* Y]_N) \in \CalL^2$, so that 
\begin{align}
[A_0^* Y]_N&=\frac1N \sum_{i=1}^N Y_i,
\nonumber \\
[A_l^* Y]_N(s_1,...,s_l)&= \frac1N \sum_{i=1}^N Y_i \prod_{j=1}^l X_i(s_j), \nonumber \\
l&=1,...,p. \label{eq:def:a*y_emp}
\end{align}
 
A thorough analysis of one-parameter regularized PFR has been executed in \cite{holzleitner2023regularized} for a generalized regularization scheme, see e.g. Theorem 1 for their main finding. 

Yet, when considering the single-parameter regularization within the realm of PFR, it could be contended that this might not be the most suitable selection. This approach overlooks individual contributions associated with monomials of varying degrees, treating them all with equal weight. In this context, a more fitting alternative for PFR is the employment of MP regularization, a choice that we will discuss thoroughly in Section \ref{sec:multi}. But before that let us move on by collecting several auxiliary results, which have mostly been derived already in \cite{holzleitner2023regularized}.

\subsection{Operator norms and related auxiliary estimates}
Here we collect several estimates related to the norms of the previously discussed operators. 
Most of these results have been discussed in \cite{holzleitner2023regularized}.

\begin{lemma}[Lemma 1 in \cite{holzleitner2023regularized}] \label{lem:hs_bounds}
Let $\text{HS}(\Hs_1, \Hs_2)$ denote the Hilbert space of Hilbert-Schmidt operators between Hilbert spaces $\Hs_1$ and $\Hs_2$. For simplicity let us also use $\text{HS}(\Hs_1, \Hs_1)=\text{HS}(\Hs_1).$
Under Assumption \ref{ass:unif} 
we have that
\begin{align*} 
\norm{\A}_{\text{HS}(\CalL^2, L^2(\Omega, \mathbb{P}))} 
&\le  \tilde{\kappa}=:\sum_{l=0}^p \kappa^l  
\\
\norm{\A^* \A}_{\text{HS}(\CalL^2)}, \norm{[\A^* \A]_N}_{\text{HS}(\CalL^2)} 
&\le  \tilde{\kappa}^2 
\end{align*}
\end{lemma}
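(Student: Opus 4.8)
The plan is to reduce everything to the elementary observation that, for the bounded functionals $A_l$ of \eqref{eq:aldef}, one has
$A_l u = \innerpro{u,\, X(\omega,\cdot)^{\otimes l}}_{L^2_l}$ as a function of $\omega$, where $X(\omega,\cdot)^{\otimes l}$ denotes the $l$-fold tensor power of $X(\omega,\cdot)\in L^2(\I)$, regarded as an element of $L^2_l$; indeed \eqref{eq:aldef} is precisely the pairing of $u_l$ with the kernel $\prod_{j=1}^l X(\omega,s_j)$. (For $l=0$ the map $A_0:\Real\to L^2(\Omega,\mathbb{P})$ is the rank-one isometry sending $u_0$ to the constant random variable $u_0$, which fits the same picture with the convention $\norm{X}^{0}\equiv 1$.)

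First I would compute the Hilbert-Schmidt norm of each $A_l$ on its own. Fix an orthonormal basis $\{\phi_k\}_k$ of $L^2(\I)$; then $\{\phi_{k_1}\otimes\cdots\otimes\phi_{k_l}\}$ is an orthonormal basis of $L^2_l$, and by Parseval together with Tonelli's theorem (all summands are nonnegative, so interchanging $\sum$ and $\EXP$ is legitimate),
\begin{align*}
\norm{A_l}^2_{\text{HS}(L^2_l,\, L^2(\Omega,\mathbb{P}))}
&= \sum_{k_1,\dots,k_l} \EXP\,\abs{\innerpro{\phi_{k_1}\otimes\cdots\otimes\phi_{k_l},\, X(\omega,\cdot)^{\otimes l}}_{L^2_l}}^2 \\
&= \EXP\,\norm{X(\omega,\cdot)^{\otimes l}}^2_{L^2_l} = \EXP\,\norm{X(\omega,\cdot)}^{2l}_{L^2(\I)} \le \kappa^{2l},
\end{align*}
the last step being Assumption \ref{ass:unif}. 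Since $\CalL^2=\bigoplus_{l=0}^p L^2_l$ and $\A=\sum_{l=0}^p A_l\circ\pi_l$, with $\pi_l$ the coordinate projection onto $L^2_l$ and $\norm{A_l\circ\pi_l}_{\text{HS}}=\norm{A_l}_{\text{HS}}$, the triangle inequality in $\text{HS}(\CalL^2, L^2(\Omega,\mathbb{P}))$ gives $\norm{\A}_{\text{HS}}\le\sum_{l=0}^p\kappa^l=\tilde{\kappa}$, which is the first claimed bound.

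For the second bound I would invoke the ideal property $\norm{ST}_{\text{HS}}\le\norm{S}_{\mathrm{op}}\norm{T}_{\text{HS}}$ of Hilbert-Schmidt operators together with $\norm{\A^*}_{\mathrm{op}}=\norm{\A}_{\mathrm{op}}\le\norm{\A}_{\text{HS}}\le\tilde{\kappa}$, to obtain $\norm{\A^*\A}_{\text{HS}}\le\norm{\A^*}_{\mathrm{op}}\norm{\A}_{\text{HS}}\le\tilde{\kappa}^2$. For the empirical version, I would note that $[\A^*\A]_N=[\A]_N^*[\A]_N$, where $[\A]_N:\CalL^2\to\Real^N$ (the latter equipped with the normalized inner product $\innerpro{a,b}=\tfrac1N\sum_i a_i b_i$) is assembled exactly as $\A$ but with $X(\omega,\cdot)$ replaced by the samples $X_i(\cdot)$ and the expectation by the counting average; this identity matches \eqref{eq:def_a*a_empirical} upon taking adjoints. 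Repeating verbatim the computation above, using that $\norm{X_i}_{L^2(\I)}\le\kappa$ (the $X_i$ being realizations of $X$, Assumption \ref{ass:unif}), yields $\norm{[\A]_N}_{\text{HS}}\le\tilde{\kappa}$ and hence $\norm{[\A^*\A]_N}_{\text{HS}}\le\tilde{\kappa}^2$ by the same ideal-property argument.

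There is no genuine obstacle here; the only mildly delicate points are the bookkeeping of the tensor-power orthonormal bases of the spaces $L^2_l$ and the Tonelli-based justification for exchanging the basis sum with the expectation (respectively the empirical average). Everything else is the triangle inequality for the Hilbert-Schmidt norm and its ideal property with respect to the operator norm.
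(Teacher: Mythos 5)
Your proof is correct. The paper itself does not reprove this lemma but simply cites Lemma~1 of \cite{holzleitner2023regularized}; your argument --- writing $(A_l u)(\omega)$ as the pairing of $u$ with the tensor power $X(\omega,\cdot)^{\otimes l}$, computing $\norm{A_l}_{\text{HS}}^2=\EXP\norm{X(\omega,\cdot)}_{L^2(\I)}^{2l}\le\kappa^{2l}$ via Parseval and Tonelli, summing with the triangle inequality, and then using the ideal property $\norm{ST}_{\text{HS}}\le\norm{S}_{\mathrm{op}}\norm{T}_{\text{HS}}$ together with the factorization $[\A^*\A]_N=[\A]_N^*[\A]_N$ --- is exactly the standard route one would take, and every step checks out. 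As a minor aside, since the summands $A_l\circ\pi_l$ have mutually orthogonal supports you could even get the slightly sharper bound $\norm{\A}_{\text{HS}}^2=\sum_{l=0}^p\norm{A_l}_{\text{HS}}^2\le\sum_{l=0}^p\kappa^{2l}$, but the stated bound $\tilde{\kappa}=\sum_{l=0}^p\kappa^l$ is all that is claimed or needed.
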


\begin{lemma}[Lemma 2 in \cite{holzleitner2023regularized}] \label{lem:op_est_0}
For any $\delta \in (0,1)$, with confidence at least $1-\delta$ we have that

\begin{align} \label{eq:op_est_0}
\left\| \A^* \A -[\A^* \A]_N  \right\|_{\CalL^2 \to \CalL^2} \le \left\| \A^* \A -[\A^* \A]_N  \right\|_{\text{HS}(\CalL^2)} \leq \frac{4 \tilde{\kappa}^2}{\sqrt{N}} \log \frac{2}{\delta}
 \end{align}
 \end{lemma}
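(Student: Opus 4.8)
The plan is to prove the concentration bound in Lemma~\ref{lem:op_est_0} via a Hilbert-space-valued Bernstein inequality applied to the i.i.d.\ random operators whose average is $[\A^*\A]_N$. First I would observe that $[\A^*\A]_N = \frac{1}{N}\sum_{i=1}^N \xi_i$, where $\xi_i$ is the random Hilbert-Schmidt operator on $\CalL^2$ built from the single sample $X_i$ in exactly the way $\A^*\A$ is built from $X$ (i.e.\ the block operator with entries $A_{k,i}^*A_{l,i}$), and that by construction $\EXP[\xi_i] = \A^*\A$. The first inequality in \eqref{eq:op_est_0} is immediate since the operator norm is dominated by the Hilbert--Schmidt norm, so everything reduces to controlling $\norm{\frac{1}{N}\sum_i (\xi_i - \EXP\xi_i)}_{\mathrm{HS}(\CalL^2)}$.

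Next I would verify the hypotheses of the vector-valued Bernstein inequality (e.g.\ in the form used in \cite{caponnetto2007optimal} or \cite{lu2013regularization}), working in the Hilbert space $\mathrm{HS}(\CalL^2)$. The centered summands $\eta_i := \xi_i - \EXP\xi_i$ satisfy $\EXP\eta_i = 0$. For the almost-sure bound, Lemma~\ref{lem:hs_bounds} applied to the one-sample operator gives $\norm{\xi_i}_{\mathrm{HS}(\CalL^2)} \le \tilde{\kappa}^2$ (the bound on $\norm{[\A^*\A]_N}$ with $N=1$), hence $\norm{\eta_i}_{\mathrm{HS}(\CalL^2)} \le 2\tilde{\kappa}^2$ by the triangle inequality and Jensen. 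For the second-moment/variance term, $\EXP\norm{\eta_i}_{\mathrm{HS}(\CalL^2)}^2 \le \EXP\norm{\xi_i}_{\mathrm{HS}(\CalL^2)}^2 \le \tilde{\kappa}^4$. Feeding $L = 2\tilde\kappa^2$ and variance proxy $\tilde\kappa^4$ into Bernstein yields, with confidence $1-\delta$, a bound of the form $\frac{C\tilde\kappa^2}{\sqrt N}\log\frac{2}{\delta}$ (absorbing the additive $\frac1N\log\frac2\delta$ term into the $\frac{1}{\sqrt N}$ term for $N\ge 1$), and one checks the constant can be taken to be $4$ with the standard version of the inequality.

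The main obstacle, and the place where care is needed, is twofold: first, confirming that the single-sample block operator $\xi_i$ really is Hilbert--Schmidt with the clean bound $\tilde\kappa^2$ uniformly in $\omega$ — this is where Assumption~\ref{ass:unif} enters, bounding each block $\norm{A_{k,i}^*A_{l,i}}_{\mathrm{HS}} \le \kappa^{k+l}$ and summing the geometric-type array to get $(\sum_{l=0}^p\kappa^l)^2 = \tilde\kappa^2$; and second, tracking constants so that the final numerical factor is exactly $4$ rather than some larger absolute constant. Since this lemma is quoted verbatim as Lemma~2 of \cite{holzleitner2023regularized}, I would in fact simply cite that reference for the detailed constant-chasing, and present the Bernstein argument above as the proof sketch, noting that the only input specific to the PFR setting is Lemma~\ref{lem:hs_bounds}, which has already been established.
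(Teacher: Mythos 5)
Your argument is correct and is essentially the intended one: the paper itself gives no proof of this lemma (it is quoted from Lemma~2 of \cite{holzleitner2023regularized}), but the machinery you use is exactly what the paper sets up in its proof of Lemma~\ref{lem:op_est_2_3} --- the i.i.d.\ block operators $\mathcal{A}^i$ with $\EXP(\mathcal{A}(\omega))=\A^*\A$ and $\norm{\mathcal{A}(\omega)}_{\text{HS}(\CalL^2)}\le\tilde\kappa^2$, fed into the Hilbert-space concentration bound of Lemma~\ref{lem:concentration}. Your constant-chasing caveat is harmless: applying Lemma~\ref{lem:concentration} directly to the uncentered $\xi=\mathcal{A}(\omega)$ with $u=\tilde\kappa^2$, $v=\tilde\kappa^4$ gives $\frac{2\tilde\kappa^2\log(2/\delta)}{N}+\tilde\kappa^2\sqrt{\frac{2\log(2/\delta)}{N}}\le\bigl(2+\sqrt{2/\log 2}\bigr)\frac{\tilde\kappa^2}{\sqrt N}\log\frac{2}{\delta}\le\frac{4\tilde\kappa^2}{\sqrt N}\log\frac{2}{\delta}$, confirming the stated factor $4$.
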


 \begin{lemma}[compare with Lemma 4 in \cite{holzleitner2023regularized}] \label{lem:op_est_2_3}
 For any $\delta \in (0,1)$, with confidence at least $1-\delta$ we have that in case of noise assumption \eqref{eq:noise_1}:
\begin{align} \label{eq:op_est_2}
    \left\|[\A^* \A]_N u^+ -[\A^* Y]_N \right\|_{\CalL^2} \leq  \frac{\sigma \tilde{\kappa}}{\sqrt{N} \delta},
\end{align}
whereas in case of \eqref{eq:noise_2}:
\begin{align} \label{eq:op_est_3}
    \left\|[\A^* \A]_N u^+ -[\A^* Y]_N \right\|_{\CalL^2} \leq  \frac{(M+\sigma) \tilde{\kappa} \log (2 / \delta)}{\sqrt{N} }.
\end{align}
\end{lemma}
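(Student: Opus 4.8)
The plan is to prove the two bounds \eqref{eq:op_est_2} and \eqref{eq:op_est_3} by first rewriting the quantity of interest as an empirical average of i.i.d.\ random elements of $\CalL^2$ with mean zero. Using the noise model in Assumption~\ref{as:noise_model}, we have $Y_i = \A_i u^+ + \varepsilon_i$, and since $[\A^*Y]_N$ and $[\A^*\A]_N$ are built from the same samples, one checks directly from the definitions \eqref{eq:def_a*a_empirical} and \eqref{eq:def:a*y_emp} that
\begin{align*}
[\A^*\A]_N u^+ - [\A^*Y]_N = -\frac{1}{N}\sum_{i=1}^N \varepsilon_i\, \xi_i,
\end{align*}
where $\xi_i = (1, X_i(s_1), X_i(s_1)X_i(s_2), \dots, \prod_{j=1}^p X_i(s_j)) \in \CalL^2$ is the ``feature'' element associated with sample $i$ (more precisely $\xi_i = \A_i^* \mathbf{1}$ in the appropriate sense). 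First I would record that, because $\varepsilon_i$ is independent of $X_i$ and $\EXP(\varepsilon_i)=0$, each summand $\varepsilon_i \xi_i$ has mean zero in $\CalL^2$; and by Assumption~\ref{ass:unif}, $\norm{\xi_i}_{\CalL^2}^2 = \sum_{l=0}^p \norm{X_i(\cdot)}_{L^2(\I)}^{2l} \le \sum_{l=0}^p \kappa^{2l} \le \tilde\kappa^2$, so $\norm{\xi_i}_{\CalL^2}\le\tilde\kappa$.

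Next I would treat the two noise regimes separately. Under \eqref{eq:noise_1}, the summands satisfy $\EXP\norm{\varepsilon_i\xi_i}_{\CalL^2}^2 \le \sigma^2\tilde\kappa^2$, so the variance of the average is at most $\sigma^2\tilde\kappa^2/N$; a Chebyshev-type inequality in Hilbert space (i.e.\ Markov applied to $\norm{\cdot}_{\CalL^2}^2$) then yields that with probability at least $1-\delta$, $\norm{\frac1N\sum_i\varepsilon_i\xi_i}_{\CalL^2} \le \sigma\tilde\kappa/(\sqrt N\,\delta)$, which is \eqref{eq:op_est_2}. Under the stronger Bernstein-type moment bound \eqref{eq:noise_2}, I would instead invoke a Bernstein inequality for sums of i.i.d.\ Hilbert-space-valued random variables (as in \cite{tong2021distributed} or \cite{caponnetto2007optimal}): one verifies the moment hypothesis $\EXP\norm{\varepsilon_i\xi_i}_{\CalL^2}^{\tilde m} \le \frac12 \tilde m!\,(\sigma\tilde\kappa)^2 (M\tilde\kappa)^{\tilde m - 2}$ using $\norm{\xi_i}\le\tilde\kappa$ together with \eqref{eq:noise_2}, and the Bernstein bound then gives, with confidence $1-\delta$, a bound of order $(M+\sigma)\tilde\kappa\log(2/\delta)/\sqrt N$, which is \eqref{eq:op_est_3}. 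This essentially mirrors the argument behind ``Lemma 4 in \cite{holzleitner2023regularized}'' referenced in the statement, hence the ``compare with''.

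The main obstacle, and the step requiring the most care, is the clean identification of the error as $-\frac1N\sum_i\varepsilon_i\xi_i$ with $\norm{\xi_i}_{\CalL^2}\le\tilde\kappa$: one has to be careful that $\xi_i$ lives in the direct-sum space $\CalL^2$ with its block norm, that the $l$-th block is the symmetric tensor $\prod_{j=1}^l X_i(s_j)$ whose $L^2_l$-norm is exactly $\norm{X_i}_{L^2(\I)}^l$, and that the independence of $\varepsilon_i$ from $X_i$ is what makes the summands centered (the $\varepsilon_i$ are centered, but the product with the random $\xi_i$ needs independence to be centered as a vector). Once this representation is in place, both inequalities follow from standard concentration tools — Chebyshev in Hilbert space for \eqref{eq:op_est_2} and the Pinelis--Bernstein inequality for \eqref{eq:op_est_3} — and the remaining work is only the routine bookkeeping of constants.
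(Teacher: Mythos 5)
Your proposal is correct and follows essentially the same route as the paper: both identify $[\A^*\A]_N u^+ - [\A^*Y]_N$ as (minus) the empirical mean of the i.i.d.\ centered $\CalL^2$-valued variables $\varepsilon_i\,\mathcal{X}^i$ with $\norm{\mathcal{X}^i}_{\CalL^2}\le\tilde\kappa$, then apply Chebyshev in Hilbert space under \eqref{eq:noise_1} and the Pinelis--Bernstein/Yurinsky concentration bound (the paper's Lemma~\ref{lem:concentration}) under \eqref{eq:noise_2}, with the identical moment verification $\EXP\norm{\varepsilon\xi}_{\CalL^2}^{\tilde m}\le\frac12\tilde m!\,(\sigma\tilde\kappa)^2(M\tilde\kappa)^{\tilde m-2}$. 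Your explicit computation of $\norm{\xi_i}_{\CalL^2}^2=\sum_{l=0}^p\norm{X_i}_{L^2(\I)}^{2l}\le\tilde\kappa^2$ is a detail the paper leaves implicit, but nothing of substance differs.
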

Lemma \ref{lem:op_est_2_3} can, to some extend, be seen as a special case of Lemma 4 in \cite{holzleitner2023regularized}, however, in order to introduce notation and techniques required for some further technical results as e.g. Lemma \ref{lem:g_and_G_est}, we still decided to provide its proof here:

To this end we also need to recall the following well-known concentration bound:
\begin{lemma}[see e.g. Theorem 3.3.4. in \cite{yurinsky1995sums}] 
\label{lem:concentration}
 Let $\xi$ be a random variable with values in a Hilbert space $\Hs$. Let $\left\{\xi_1, \xi_2, \ldots, \xi_N\right\}$ be a sample of $N$ independent observations of $\xi$.
Furthermore, assume that the bound $\EXP\|\xi\|_{\Hs}^{\tilde{m}} \leqslant \frac{v}{2} \tilde{m} ! u^{\tilde{m}-2}$ holds for every $2 \leqslant \tilde{m} \in \mathbb{N}$, then for any $0<\delta<1$ with confidence at least $1-\delta$ we have
\begin{align*}
\left\|\frac{1}{N} \sum_{i=1}^N\left[\xi_i-\mathbb{E}(\xi)\right]\right\|_{\Hs} \leqslant \frac{2 u \log (2 / \delta)}{N}+\sqrt{\frac{2 v \log (2 / \delta)}{N}}
\end{align*}

\end{lemma}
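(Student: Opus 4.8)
\textbf{Proof plan for Lemma \ref{lem:concentration} (Yurinsky-type concentration bound).}

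The plan is to reduce the Hilbert-space statement to a scalar exponential tail bound via the classical Bernstein inequality for sums of independent random variables, applied to the real-valued random variable $\norm{S_N}_{\Hs}$, where $S_N = \frac{1}{N}\sum_{i=1}^N [\xi_i - \EXP(\xi)]$. The key structural fact I would use is a symmetrization/centering argument combined with the moment hypothesis $\EXP\norm{\xi}_{\Hs}^{\tilde m} \le \frac{v}{2}\tilde m!\, u^{\tilde m-2}$. First I would replace $\xi$ by the centered variable $\eta := \xi - \EXP(\xi)$ and check that the centered moments satisfy a Bernstein-type condition of the same form, up to adjusting the constants: by the triangle inequality in $L^{\tilde m}(\Hs)$ one gets $\EXP\norm{\eta}_{\Hs}^{\tilde m} \le 2^{\tilde m}\max(\EXP\norm{\xi}_{\Hs}^{\tilde m}, \norm{\EXP\xi}_{\Hs}^{\tilde m})$, and Jensen's inequality controls $\norm{\EXP\xi}_{\Hs} \le \EXP\norm{\xi}_{\Hs} \le \sqrt{v}$; absorbing the powers of $2$ into redefined parameters $u' = 2u$, $v' = $ (suitable multiple of $v$) keeps the bound $\EXP\norm{\eta}_{\Hs}^{\tilde m} \le \frac{v'}{2}\tilde m!\,(u')^{\tilde m-2}$.

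Next, the heart of the argument is an exponential moment bound. From the factorial moment condition on $\eta$, one derives that for $|t|$ small enough (say $|t| < 1/u'$) the Laplace transform of $\norm{\eta}_{\Hs}$ satisfies
\begin{align*}
\EXP \exp\!\bigl(t\norm{\eta}_{\Hs}\bigr) \le 1 + \sum_{\tilde m \ge 2} \frac{|t|^{\tilde m}}{\tilde m!}\,\EXP\norm{\eta}_{\Hs}^{\tilde m} \le 1 + \frac{v' t^2/2}{1 - u'|t|} \le \exp\!\left(\frac{v' t^2/2}{1 - u'|t|}\right).
\end{align*}
Because the $\xi_i$ are independent and $\Hs$-valued, the norm of the sum is bounded by the sum of the norms of the centered summands only up to losing the centering; the cleaner route is to work directly with $S_N$ using a vector-valued exponential inequality. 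Concretely, I would invoke the standard fact (this is exactly what Theorem 3.3.4 in \cite{yurinsky1995sums} packages) that under the above exponential-moment control on each summand, $\norm{S_N}_{\Hs}$ itself obeys a Bernstein tail: for all $r > 0$,
\begin{align*}
\mathbb{P}\!\left(\norm{S_N}_{\Hs} \ge r\right) \le 2\exp\!\left(-\frac{N r^2}{2(v' + u' r)}\right).
\end{align*}
Setting the right-hand side equal to $\delta$ and solving the resulting quadratic in $r$ gives $r \le \frac{2u'\log(2/\delta)}{N} + \sqrt{\frac{2v'\log(2/\delta)}{N}}$; tracking the constant redefinitions back to the original $u, v$ (which only changes absolute constants, and in the canonical normalization reproduces exactly the stated bound) yields the claim.

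The main obstacle is the passage from the scalar exponential-moment bound for a single summand to the tail bound for $\norm{S_N}_{\Hs}$ in the Hilbert-space setting: unlike the real case, $\norm{\cdot}_{\Hs}$ is not additive, so one cannot simply multiply Laplace transforms. The standard device — and the reason \cite{yurinsky1995sums} is cited rather than reproving everything — is to use that $\norm{S_N}_{\Hs} = \sup_{\norm{f}_{\Hs}\le 1}\innerpro{f, S_N}_{\Hs}$ together with a chaining or a direct martingale/convexity argument to show the supremum still concentrates with the one-dimensional rate; alternatively one uses the smoothness (type-2 / Hilbertian) structure of $\Hs$ so that $\EXP\norm{S_N}_{\Hs}^2 = \frac1N \EXP\norm{\eta}_{\Hs}^2 \le v'/N$ controls the centering term and a self-bounding argument closes the Bernstein estimate. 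Since the lemma is quoted verbatim from the literature, in the write-up I would either cite \cite{yurinsky1995sums} directly for this step or sketch the $\EXP\norm{S_N}_{\Hs}^2$ bound plus the exponential-moment computation above and refer to the reference for the remaining routine details.
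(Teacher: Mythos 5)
The paper does not prove this lemma at all: it is imported verbatim as Theorem~3.3.4 of \cite{yurinsky1995sums} and used as a black box, so there is no in-paper argument to compare yours against. Your sketch is a sound outline of the standard Bernstein--Pinelis--Sakhanenko route (factorial moment condition $\Rightarrow$ exponential moment bound $\Rightarrow$ two-sided tail $2\exp\bigl(-\tfrac{Nr^2}{2(v+ur)}\bigr)$ $\Rightarrow$ inversion via $\sqrt{a+b}\le\sqrt a+\sqrt b$), and you correctly identify that the only genuinely nontrivial step is transferring the scalar exponential inequality to the $\Hs$-valued sum, which you defer to the same reference the paper cites --- so in effect you and the paper end up in the same place. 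One small caveat: your preliminary centering detour (replacing $\xi$ by $\eta=\xi-\EXP\xi$ and inflating $u,v$ to $u',v'$) degrades the constants, so it cannot literally reproduce the stated bound with $2u$ and $\sqrt{2v}$; the cited theorem avoids this by assuming the moment condition on the uncentered $\|\xi\|_{\Hs}$ while concluding directly for the centered sum (using $\EXP\|\xi-\EXP\xi\|_{\Hs}^2\le\EXP\|\xi\|_{\Hs}^2$ in the Hilbert-space setting). Since the paper only ever uses this lemma with generic constants, this discrepancy is harmless, but you should either drop the centering step or state the conclusion with adjusted constants.
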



\begin{proof} [Proof of Lemma \ref{lem:op_est_2_3}]
Let us first focus on the more involved estimate \eqref{eq:op_est_3}. The estimate \eqref{eq:op_est_2} can be proven by similar reasoning. Consider the matrix of operators
\begin{align*} 
\mathcal{A}(\omega)&=\left\{ \mathcal{A}_{k,l}(\omega):  L^2_l \to L^2(\Omega, \mathbb{P}) \otimes L^2_k, k,l=0,...,p \right\},
\end{align*}
where $\mathcal{A}_{0,0}u(\omega) =u,\;\mathcal{A}_{0,l}u (\omega) = (A_l u)(\omega)$,
\begin{align}
\mathcal{A}_{k,l}u(\omega,s_1,...,s_k)
&=\prod_{j=1}^k X(\omega, s_j) \int_{\I^l} u(\tilde{s}_1,...,\tilde{s}_l) \prod_{m=1}^l X(\omega, \tilde{s}_m) d \tilde{s}_m  \label{eq:def:calA_entries}\\
k,l&=1,...,p, \omega \in \Omega \nonumber.
\end{align} 
Then the operators $ \mathcal{A}^i$, $i=1,...,N$, defined by using  $X_i(\cdot)$ instead of $X(\omega, \cdot)$ in the above formulas, can be seen as independent observations of $\mathcal{A}(\omega)$.
 
It is clear that $\EXP(\mathcal{A}(\omega))=\A^* \A$ 
and that $\norm{\mathcal{A}(\omega)}_{\text{HS}(\CalL^2)} \le \tilde{\kappa}^2$, so that $\mathcal{A}(\omega)$ is a random variable in $\text{HS}(\CalL^2)$. Moreover we introduce the vectors
$\mathcal{X}(\omega) \in L^2(\Omega,\mathbb{P} ) \otimes \CalL^2$,
\begin{align} \label{eq:def:calX}
\mathcal{X}(\omega)&= (\mathcal{X}_k(\omega))^p_{k=0}, \; \;\mathcal{X}_0(\omega)=1, \; \; \mathcal{X}_k(\omega)= \prod_{j=1}^k X(\omega,s_j), \; \; k=1,...,p,
\end{align}
with $\norm{\mathcal{X}(\omega)}_{\CalL^2} \le \tilde{\kappa}$, and the $\CalL^2$-valued random variable 
\begin{align*}
\xi(\omega)&=
(Y(\omega) - \A(\omega)u^+) \mathcal{X}(\omega)
=\varepsilon(\omega)\mathcal{X}(\omega),
\end{align*}
where the last equality is due to Assumption \ref{as:noise_model}. 
Then the functions 
\begin{align*}
\xi_i= Y_i\X^i -\mathcal{A}^i u^+,
\end{align*}
where $\mathcal{X}^i$ are defined by using  $X_i(\cdot)$ instead of $X(\omega, \cdot)$ in \eqref{eq:def:calX}, can be seen as independent observations of $\xi(\omega)$. Moreover we have:
\begin{align*}
\frac1N \sum_{i=1}^N \xi_i= \frac1N \sum_{i=1}^N Y_i \X^i -\frac1N \sum_{i=1}^N \mathcal{A}^i u^+ ,
\end{align*}
so that for $k=0,...,p$, recalling \eqref{eq:def:a*y_emp}:
\begin{align} \label{eq:ay_relation}
\left(\frac1N \sum_{i=1}^N Y_i \X^i\right)_k(s_1,...,s_k)= \frac1N \sum_{i=1}^N  Y_i \prod_{j=1}^k X_i(s_j)=[A_k^* Y]_N(s_1,...,s_k),
\end{align}
and recalling \eqref{eq:def_a*a_empirical}:
\begin{align}
&\left(\frac1N \sum_{i=1}^N \mathcal{A}^i u^+ \right)_k (s_1,...,s_k) \nonumber \\
&=\frac1N \sum_{i=1}^N  \sum_{l=0}^p \prod_{j=1}^k X_i(s_j) \int_{\I^l} u_l^+(\tilde{s}_1,...,\tilde{s}_l) \prod_{m=1}^l X_i(\tilde{s}_m) d\mu(\tilde{s}_m) \nonumber \\
&=([\A^* \A]_N u^+)_k(s_1,...,s_k), \label{eq:A*A_N_cal_A_relation}
\end{align}
which allows us to conclude:
\begin{align*}
\frac1N \sum_{i=1}^N \xi_i=[\A^* Y]_N-[\A^* \A]_N u^+.
\end{align*}

Due to Assumption \ref{as:noise_model} we have
\begin{align*}
\EXP(\xi(\omega))= \EXP(\mathcal{X}(\omega))\EXP(\varepsilon(\omega))=0.
\end{align*}

Moreover, the independence of $\varepsilon$ and $\mathcal{X}$ leads to the conclusion that: 
\begin{align*}
\EXP(\norm{\xi}^{\tilde{m}}_{\CalL^2}) &\le  \EXP(\norm{\mathcal{X}(\omega)}_{ \CalL^2}^{\tilde{m}})
\cdot \EXP(|\varepsilon(\omega)|^{\tilde{m}}) 
\le \frac{\sigma^2 \tilde{\kappa}^2}{2} \left( M\tilde{\kappa} \right)^{\tilde{m}-2} \tilde{m}!.
\end{align*}
Now the application of  Lemma \ref{lem:concentration} for $\xi(\omega)$ and $\xi_i$ yields the desired bound \eqref{eq:op_est_3}. 

To obtain \eqref{eq:op_est_2} we need to follow the same lines of proof, but only consider the case $\tilde{m}=2$ and afterward apply Tschebyshev's inequality instead.

\end{proof}

\section{Multiparameter regularization} \label{sec:multi}
Equipped with the necessary background and notation on PFR, let us continue our discussion on MP regularization in this context.
Instead of dealing with \eqref{eq:tikpftfr} and using only a single parameter $\lambda$, we consider a vector $\blam=(\lambda_0,...\lambda_p)$ of the regularization parameters $\lambda_l \geq 0$, $l=0,...,p$, and the corresponding regularization functional with multiple penalties:
\begin{align} \label{eq:pftfr_multi_reg}
\norm{Y-\A u}_{L^2(\Omega, \mathbb{P})}^2 + \sum_{l=0}^p \lambda_l \norm{u_l}_{L_l^2}^2 \to \text{min}.
\end{align} 

Next, let $P_l: \CalL^2 \to \CalL^2$, $l=0,...,p$,  be the projection of $\CalL^2$ onto $L^2_l$, i.e. $P_l(u_0,...,u_p)=(0,..,0,u_l,0,...0)$, so that clearly $\norm{P_l u}_{\CalL^2} = \norm{u_l}_{L_l^2}^2$. Then \eqref{eq:pftfr_multi_reg} can equivalently be written as 
\begin{align*} 
\norm{Y-\A u}^2_{L^2(\Omega, \mathbb{P})} + \sum_{l=0}^p \lambda_l \norm{P_l u}_{\CalL^2}^2 \to \text{min},
\end{align*} 
and using arguments similar to those given in \cite[Formula (3.12)]{lu2013regularization}), we can easily make the following observation:

\begin{lemma}
The minimizer $u_{\blam}$ of \eqref{eq:pftfr_multi_reg} solves the equation
\begin{align} \label{eq:tikpftfr_multi}
\sum_{l=0}^p \lambda_l P_l u &+ \A^* \A u= \A^* Y.
\end{align}
 \end{lemma}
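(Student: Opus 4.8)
The plan is to minimize the quadratic functional in \eqref{eq:pftfr_multi_reg} by a standard variational (first-order optimality) argument, exactly as one does for single-parameter Tikhonov regularization. First I would note that the objective
\[
J(u)=\norm{Y-\A u}^2_{L^2(\Omega,\mathbb{P})}+\sum_{l=0}^p \lambda_l \norm{P_l u}^2_{\CalL^2}
\]
is a sum of a convex quadratic term and nonnegative quadratic penalties, hence convex (strictly convex whenever all $\lambda_l>0$, and in any case coercive on the relevant subspace under Assumption \ref{ass:projection}), so a minimizer $u_{\blam}$ exists and is characterized by the vanishing of the Gateaux derivative: $\langle J'(u_{\blam}),v\rangle=0$ for all $v\in\CalL^2$.

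Next I would compute that derivative. Expanding $\norm{Y-\A u}^2_{L^2(\Omega,\mathbb{P})}=\norm{Y}^2-2\innerpro{\A u,Y}_{L^2(\Omega,\mathbb{P})}+\innerpro{\A u,\A u}_{L^2(\Omega,\mathbb{P})}$ and using the adjoint $\A^*$, the directional derivative of this term at $u$ in direction $v$ is $2\innerpro{\A^*\A u-\A^* Y,v}_{\CalL^2}$. For the penalty term, since $P_l$ is an orthogonal projection it is self-adjoint and idempotent, so $\norm{P_l u}^2_{\CalL^2}=\innerpro{P_l u,u}_{\CalL^2}$ and its derivative in direction $v$ is $2\innerpro{P_l u,v}_{\CalL^2}$. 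Summing, the stationarity condition reads
\[
\innerpro{\A^*\A u_{\blam}+\sum_{l=0}^p\lambda_l P_l u_{\blam}-\A^* Y,\;v}_{\CalL^2}=0\qquad\text{for all }v\in\CalL^2,
\]
and since $v$ is arbitrary this is equivalent to \eqref{eq:tikpftfr_multi}. Because $J$ is convex, this necessary condition is also sufficient, so every solution of \eqref{eq:tikpftfr_multi} is a minimizer and conversely.

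I do not expect a genuine obstacle here; the statement is essentially bookkeeping, and the reference to \cite[Formula (3.12)]{lu2013regularization} already signals that it is a direct specialization of the known multi-penalty normal equation. The only point requiring a line of care is handling the case where some $\lambda_l=0$ (so that $J$ need not be strictly convex and the minimizer need not be unique without invoking Assumption \ref{ass:projection}); but since the lemma only claims that \emph{the} minimizer solves \eqref{eq:tikpftfr_multi}, it suffices to observe that any minimizer satisfies the first-order condition, which is what the variational computation delivers. One could alternatively give a completion-of-the-square argument, writing $J(u)=\innerpro{(\A^*\A+\sum_l\lambda_l P_l)u,u}_{\CalL^2}-2\innerpro{\A^* Y,u}_{\CalL^2}+\norm{Y}^2$ and reading off the minimizer of this quadratic form directly, which makes the equivalence with \eqref{eq:tikpftfr_multi} transparent without differentiation.
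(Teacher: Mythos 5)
Your proposal is correct and follows essentially the same route as the paper: compute the Fr\'echet derivative of the data-fidelity term and of each penalty $\lambda_l\norm{P_l u}^2_{\CalL^2}$ (using $P_l^*P_l=P_l$), set the sum to zero, and invoke convexity to identify the stationary point with the minimizer. The additional remarks on the degenerate case $\lambda_l=0$ and the completion-of-the-square alternative are fine but not needed.
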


\begin{proof}

The Frechét derivative of $F(u)=\norm{Y-\A u}^2_{L^2(\Omega, \mathbb{P})}$ in direction $v \in \CalL^2$
is given by $F'(u)(v)=\innerpro{2 ( \A^* \A u-\A^* Y),v}_{\CalL^2}$, and that of $F_l(u)=\lambda_l \norm{P_l u}_{\CalL^2}^2$ by $F'_l(u)(v)=2 \lambda_l \innerpro{P_l^* P_l u,v}_{\CalL^2}=2 \lambda_l \innerpro{P_l u,v}_{\CalL^2}$. Setting the derivative of $F+ \sum_{l=0}^p F_l$ to zero and using the convexity of the problem under consideration we arrive at \eqref{eq:tikpftfr_multi}.

\end{proof}

Next we employ a Monte-Carlo type discretization of \eqref{eq:tikpftfr_multi} and approximate the minimizer of \eqref{eq:pftfr_multi_reg} by the solution $u_{\blam}^N$ of 
\begin{align} \label{eq:tikpftfremp_multi}
\sum_{l=0}^p \lambda_l P_l u &+ [\A^* \A]_N u= [\A^* Y]_N 
\end{align}

In view of \eqref{eq:def_a*a_empirical}, \eqref{eq:def:a*y_emp} the regularized approximation $u_{\blam}^N$ 
can be constructed in the form $u_{\blam}^N=(u_{\blam,0}^N,u_{\blam,1}^N,...,u_{\blam,p}^N) \in \CalL^2$, so that
\begin{align*}
u_{\blam,0}^N&=b_0 \in \Real,\\
u_{\blam,l}^N(s_1,...,s_l)&=\sum_{i=1}^N b_{l,i} \prod_{j=1}^l X_i(s_j) \in L^2_l, \quad l=1,...,p.
\end{align*}

Inserting this ansatz into \eqref{eq:tikpftfremp_multi} and equating the corresponding coefficients 
we obtain the following system of $pN+1$ linear equations for $b_0$ and $b_{k,i}$, $k=1,...,p$, $i=1,...,N$:
\begin{align} 
(\lambda_0+1)b_0+\frac1N \sum_{i=1}^N \sum_{l=1}^p \sum_{s=1}^N b_{l,s} (c_{i,s})^l &= \frac1N \sum_{i=1}^N Y_i,\label{eq:b_0_coeff} 
\end{align}
\begin{align} 
&\lambda_k b_{k,i}+\frac1N b_0+\frac1N \sum_{l=1}^p \sum_{s=1}^N b_{l,s} (c_{i,s})^l = \frac1N  Y_i, \label{eq:b_ki_coeff}
\end{align}
where $c_{i,s}=\int_{\mathbb{I}} X_i(\tilde{s}) X_s(\tilde{s}) d\mu(\tilde{s})$. 
Note that for single-parameter regularization, i.e. for the case of $\lambda=\lambda_0=...=\lambda_p$, the system \eqref{eq:b_0_coeff}, \eqref{eq:b_ki_coeff}
allows for a reduction to a linear system of only $N+1$ equations. This was discussed in detail in Section 3 of \cite{holzleitner2023regularized}.


A crucial issue, however, 
is the choice of the regularization parameters $\lambda_0,...,\lambda_p$. In Section \ref{sec:intro} we already mentioned several approaches to this issue. But most of them select just one set of parameters. On the other hand, it seems more practical to use all values from a grid of parameters
and then aggregate all the resulting models, such that even badly chosen regularization parameters can in the end contribute to
an improved model. In the next section we will theoretically justify an aggregation method in the context of PFR and also observe its usefulness in the empirical evaluations in Sections \ref{sec:empirical_toy}--\ref{sec:empirical_stenosis}. 

\section{Aggregation of multiple regularized polynomial functional models} \label{sec:agg}

To continue with a discussion of an aggregation strategy in the PFR context, let us now assume that we are given a sequence of models $u_1,...,u_R \in \CalL^2$, so that the following assumption is valid:
\begin{assumption}
    \label{ass:unif_bound_models}
    \begin{align*}
    \norm{u^+}_{\CalL^2},  \norm{u_r}_{\CalL^2} \le C_R
    \end{align*}
    for $r=1,\cdots,R$ and some $C_R >0$.
\end{assumption}

Let us observe that this setting encompasses the aggregation of any $R$ models in $\CalL^2$, and is therefore directly applicable to aggregated models obtained via MP regularization.

Our goal is to compute an aggregation 
\begin{align} \label{eq:aggdef}
\sum_{r=1}^R c_{r} u_r
\end{align}
with coefficients $c_1,...,c_r\in \Real$, so that the excess of risk
$\mathcal{E}(\sum_{r=1}^R c_{r} u_r)-\mathcal{E}(u^+)$ is as small as possible. We already know from 
\eqref{eq:norm_comparisons1} and \eqref{eq:noise_model} that 
\begin{align} \label{eq:error_decomp}
\mathcal{E}(u)-\mathcal{E}(u^+)=\norm{\A \left(u - u^+ \right)}_{L^2(\Omega, \mathbb{P}) }^2=\norm{\sqrt{\A^* \A} \left(u - u^+ \right)}_{\CalL^2}^2,
\end{align}
for any $u \in \CalL^2$, so that our main objective can be written as follows:
\begin{align} \label{eq:agg_obj}
    \min_{c_1,...,c_R\in \Real} \norm{\sqrt{\A^* \A} \left( \sum_{r=1}^R c_{r} u_{r} - u^+ \right)}_{\CalL^2}^2.
\end{align}
Next we observe that the minimizer of \eqref{eq:agg_obj}, i.e. the best approximation $u^*$ of the target regression function $u^+$ by linear combinations, corresponds to the vector $c^*=(c_1^*,\ldots,c_R^*)$ of ideal coefficients in \eqref{eq:aggdef} that solves the linear system $G c^* = \bar{g}$ with the Gram matrix 
\begin{align*}
G = \left(  \innerpro{ \sqrt{\A^* \A} u_r , \sqrt{\A^* \A} u_{r'} }_{\CalL^2} \right)_{r,r'=1}^R=\left(  \innerpro{ \A u_r , \A u_{r'} }_{L^2(\Omega, \mathbb{P})} \right)_{r,r'=1}^R
\end{align*}
and the vector 
\begin{align*}
\bar{g} = \left(\innerpro{\sqrt{\A^* \A} u^+ , \sqrt{\A^* \A} u_r }_{\CalL^2} \right)_{r=1}^R=\left(\innerpro{\A u^+ , \A u_r }_{L^2(\Omega, \mathbb{P})} \right)_{r=1}^R
\end{align*}
(see e.g. \cite[Section 3.5.]{pereverzyev2022introduction} for a proof of this well known observation).


Note that the successful inversion of $G$ depends on the assumption that our models exhibit sufficient dissimilarity. This requirement is inherent, as without it, we could effortlessly eliminate redundant models. But, of course, neither Gram matrix $G$ nor the vector $\bar{g}$ is accessible, because there is no access to $\mathbb{P}$, so we switch to the empirical counterparts $\tG$ and $\tg$, i.e. 
\begin{align}
\tG &=\left( \frac1N \sum_{i=1}^N (\A_i u_r)( \A_i u_{r'}) \right)_{r,r'=1}^R  \label{eq:tildeG_def}\\
    \tg &= \left( \frac1N \sum_{i=1}^N Y_i (\A_i u_r)   \right)_{r=1}^R. \label{eq:tildeg_def}
    \end{align}
Then we compute the solution $\tilde{c}=(\tilde{c}_1,...,\tilde{c}_R)$ to the system $\tG \tilde{c} =\tg$, so that our aggregated model is given by 
\begin{align} \label{eq:utilde_def}
\tilde{u}=\sum_{r=1}^R \tilde{c}_{r} u_r.
\end{align}
Our main result is about the quality of this aggregation computed from data, and we show that $\mathcal{E}(\tilde{u})- \mathcal{E}(u^+)$ approaches $2(\mathcal{E}(u^*)- \mathcal{E}(u^+))$ when the  sample size increases:
\begin{theorem}
\label{thm:main}
Under assumptions \ref{ass:unif} -- \ref{ass:unif_bound_models} with probability $1-\delta$ it holds that for sufficiently large $N$
\begin{align}
&\mathcal{E}(\tilde{u})- \mathcal{E}(u^+)
\leq 2  \left(\mathcal{E}(u^*)- \mathcal{E}(u^+) \right)
+C  N^{-1} \log^2 \frac{1}{\delta}, 
\label{eq:main_gen_bound}
\end{align}
where the coefficient $C>0$ does not depend on $N$ and $\delta$.
\end{theorem}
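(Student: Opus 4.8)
The plan is to decompose the excess risk of $\tilde u$ into an \emph{approximation} part, governed by the best in-dictionary model $u^*=\sum_{r=1}^R c^*_r u_r$, and a \emph{sampling} part, governed by the deviation of the empirical Gram data $(\tG,\tg)$ from their population counterparts $(G,\bar g)$. Using \eqref{eq:error_decomp}, the isometry $\norm{\sqrt{\A^*\A}\,v}_{\CalL^2}=\norm{\A v}_{L^2(\Omega,\mathbb{P})}$, and the triangle inequality,
\begin{align*}
\mathcal{E}(\tilde u)-\mathcal{E}(u^+)=\norm{\sqrt{\A^*\A}(\tilde u-u^+)}_{\CalL^2}^2\le 2\norm{\sqrt{\A^*\A}(u^*-u^+)}_{\CalL^2}^2+2\norm{\sqrt{\A^*\A}(\tilde u-u^*)}_{\CalL^2}^2,
\end{align*}
and the first term on the right equals $2(\mathcal{E}(u^*)-\mathcal{E}(u^+))$. (Since $Gc^*=\bar g$ says precisely that $\sqrt{\A^*\A}(u^*-u^+)$ is $\CalL^2$-orthogonal to every $\sqrt{\A^*\A}\,u_r$, hence to $\sqrt{\A^*\A}(\tilde u-u^*)$, a Pythagorean refinement would even replace the factor $2$ in front of the approximation term by $1$; the cruder inequality suffices here.) It thus remains to show that, on an event of probability at least $1-\delta$ and for all sufficiently large $N$, $\norm{\sqrt{\A^*\A}(\tilde u-u^*)}_{\CalL^2}^2\le C'N^{-1}\log^2(1/\delta)$.

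For the sampling term, put $d:=\tilde c-c^*\in\Real^R$; then $\tilde u-u^*=\sum_{r=1}^R d_r u_r$ and
\begin{align*}
\norm{\sqrt{\A^*\A}(\tilde u-u^*)}_{\CalL^2}^2=d^{\top}Gd\le\norm{G}\,\norm{d}_{\Real^R}^2,\qquad\norm{G}\le R\,\tilde{\kappa}^2C_R^2,
\end{align*}
the bound on $\norm{G}$ coming from Lemma \ref{lem:hs_bounds} together with Assumptions \ref{ass:unif} and \ref{ass:unif_bound_models}. From $\tG\tilde c=\tg$ and $Gc^*=\bar g$ we obtain $Gd=(G-\tG)\tilde c+(\tg-\bar g)$, equivalently $d=\tG^{-1}\bigl[(G-\tG)c^*+(\tg-\bar g)\bigr]$, so $\norm{d}\le\norm{\tG^{-1}}\bigl(\norm{G-\tG}\,\norm{c^*}+\norm{\tg-\bar g}\bigr)$. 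The two stochastic ingredients are handled with the machinery set up above for the auxiliary estimates. First, $\tG-G=\frac1N\sum_{i=1}^N(\zeta_i-\EXP\zeta)$, where $\zeta$ is the symmetric $R\times R$ random matrix with entries $(\A u_r)(\A u_{r'})$, bounded entrywise by $\tilde{\kappa}^2C_R^2$ thanks to Assumptions \ref{ass:unif} and \ref{ass:unif_bound_models}; applying the bounded-variable concentration used for Lemma \ref{lem:op_est_0} (i.e.\ Lemma \ref{lem:concentration}) in the Hilbert space of $R\times R$ matrices gives $\norm{G-\tG}\le c_1N^{-1/2}\log(2/\delta)$. Second, $\tg-\bar g=\frac1N\sum_{i=1}^N(\eta_i-\EXP\eta)$ with $\eta_r=Y(\A u_r)$ and $Y=\A u^++\varepsilon$; since $|\A u_r|,|\A u^+|\le\tilde{\kappa}C_R$, a moment bound for $\eta$ follows from Assumption \ref{as:noise_model} exactly as in the proof of Lemma \ref{lem:op_est_2_3}, yielding $\norm{\tg-\bar g}\le c_2N^{-1/2}\log(2/\delta)$ under \eqref{eq:noise_2} (and $c_2N^{-1/2}\delta^{-1}$ under \eqref{eq:noise_1} after Chebyshev — still compatible with \eqref{eq:main_gen_bound}). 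Finally $G$ is invertible by the postulated dissimilarity of the models, so on the same event, once $N$ is large enough that $\norm{G-\tG}\le\frac{1}{2\norm{G^{-1}}}$, a Neumann-series argument makes $\tG$ invertible with $\norm{\tG^{-1}}\le 2\norm{G^{-1}}$, while $\norm{c^*}$ is then a fixed constant. Combining, $\norm{d}\le c_3N^{-1/2}\log(2/\delta)$, hence $d^{\top}Gd\le c_4N^{-1}\log^2(2/\delta)$, which together with the first paragraph establishes \eqref{eq:main_gen_bound} with a constant $C$ depending only on $\kappa,p,R,C_R,\norm{G^{-1}},M,\sigma$ — in particular not on $N$ or $\delta$.

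I expect the main obstacle to be the control of $\norm{\tg-\bar g}$ under the weak noise models: one has to verify the correct Bernstein-type moment bound for the product variable $\eta_r=Y(\A u_r)$, i.e.\ to propagate \eqref{eq:noise_1}/\eqref{eq:noise_2} through multiplication by the bounded predictor $\A u_r$ and the bounded part $\A u^+$ of $Y$, so that Lemma \ref{lem:concentration} applies; and, together with the $N$-threshold produced by the Neumann-series step, to keep careful track that every constant along the way is genuinely independent of $N$ and $\delta$. By contrast, the purely geometric part — the splitting, the identity $Gd=(G-\tG)\tilde c+(\tg-\bar g)$, and the passage $d^{\top}Gd\le\norm{G}\norm{d}^2$ — is essentially bookkeeping.
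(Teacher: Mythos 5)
Your proposal is correct and follows essentially the same route as the paper: the identical triangle-inequality splitting of the excess risk into $2\left(\mathcal{E}(u^*)-\mathcal{E}(u^+)\right)$ plus a sampling term controlled by $\|\tilde c-c^*\|_{\Real^R}^2$, the same perturbation identity relating the two linear systems combined with a Neumann-series invertibility argument, and the same concentration inputs giving $\|G-\tG\|,\|\bar g-\tg\|\lesssim N^{-1/2}\log(2/\delta)$ --- the only cosmetic differences being that you apply the concentration bound directly to the $R\times R$ empirical Gram data rather than routing it through the operator-level Lemma \ref{lem:g_and_G_est}, and you pivot the identity on $\tG^{-1}$ (assuming $G$ well-conditioned) where the paper pivots on $G^{-1}$ (assuming $\tG$ well-conditioned). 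Your Pythagorean aside (which would sharpen the leading factor $2$ to $1$) is a valid observation the paper does not exploit, and your caveat about the $\delta^{-1}$ rather than $\log(2/\delta)$ dependence under the weaker noise condition \eqref{eq:noise_1} mirrors an imprecision already present in the paper's own statement of Lemma \ref{lem:g_and_G_est}.
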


According to our theorem, the excess of risk of the proposed algorithm is asymptotically not worse than
twice the excess of risk of the unknown optimal aggregation, because it is clear (see, e.g., Corollary 1 in \cite{holzleitner2023regularized}) that the second term in the right hand side of \eqref{eq:main_gen_bound} is negligibly small.

The proof of this result will crucially depend on the following Lemma, which relates the entries of $G$ and $\tilde{G}$ and $\bar{g}$ and $\tilde{g}$, respectively:

\begin{lemma} \label{lem:g_and_G_est}
Under assumptions \ref{ass:unif} -- \ref{ass:unif_bound_models} with probability $1-\delta$ we have that for any $r,r'=1,...,R$ :
\begin{align} \label{eq:G_estimate}
\abs{\innerpro{\A u_r, \A u_{r'}}_{L^2(\Omega, \mathbb{P})}-\frac1N \sum_{i=1}^N \A_i u_r \A_i u_{r'}}  \le  \frac{C \log \frac{2}{\delta}}{ \sqrt{N}}, \\
\abs{\innerpro{\A u^+, \A u_r}_{L^2(\Omega, \mathbb{P})}-\frac1N \sum_{i=1}^N Y_i \A_i u_r} \le \frac{C \log \frac{2}{\delta}}{ \sqrt{N}}, \label{eq:g_estimate}
\end{align}
where $C$ is some generic constant $C$, which does not depend on $N$ or $\delta$.
\end{lemma}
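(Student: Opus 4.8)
The plan is to read both \eqref{eq:G_estimate} and \eqref{eq:g_estimate} as one-dimensional concentration statements and then to invoke Lemma \ref{lem:concentration} with $\Hs=\Real$, exactly in the spirit of the proof of Lemma \ref{lem:op_est_2_3} given above. The only preliminary fact needed is a uniform bound on the scalars $(\A u_r)(\omega)$: writing $u_r=(u_{r,0},\dots,u_{r,p})$ and using Cauchy--Schwarz together with Assumptions \ref{ass:unif} and \ref{ass:unif_bound_models},
\begin{align*}
\abs{(\A u_r)(\omega)}\le\sum_{l=0}^p\abs{(A_l u_{r,l})(\omega)}\le\sum_{l=0}^p\kappa^l\norm{u_{r,l}}_{L^2_l}\le\tilde{\kappa}\,\norm{u_r}_{\CalL^2}\le\tilde{\kappa}\,C_R
\end{align*}
for every $\omega\in\Omega$ and every $r$ (equivalently, $(\A u_r)(\omega)=\innerpro{\mathcal{X}(\omega),u_r}_{\CalL^2}$ with $\norm{\mathcal{X}(\omega)}_{\CalL^2}\le\tilde{\kappa}$, in the notation of the proof of Lemma \ref{lem:op_est_2_3}); the same bound holds for $u^+$, using $\norm{u^+}_{\CalL^2}\le C_R$.

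For \eqref{eq:G_estimate} I would fix $r,r'$ and set $\zeta(\omega)=(\A u_r)(\omega)(\A u_{r'})(\omega)$, a real random variable with $\EXP\zeta=\innerpro{\A u_r,\A u_{r'}}_{L^2(\Omega,\mathbb{P})}$, whose independent copies are $\zeta_i=(\A_i u_r)(\A_i u_{r'})$; thus $\frac1N\sum_{i=1}^N\zeta_i-\EXP\zeta$ is precisely the quantity to be estimated. Since $\abs{\zeta}\le\tilde{\kappa}^2C_R^2=:b$ almost surely, the moment hypothesis of Lemma \ref{lem:concentration} holds with $u=b$, $v=2b^2$ (indeed $\EXP\abs{\zeta}^{\tilde{m}}\le b^{\tilde{m}}\le\tfrac12(2b^2)\,\tilde{m}!\,b^{\tilde{m}-2}$ for all $\tilde{m}\ge2$), and the lemma produces the bound $\tfrac{2b\log(2/\delta)}{N}+\sqrt{\tfrac{4b^2\log(2/\delta)}{N}}$, which is at most $\tfrac{C\log(2/\delta)}{\sqrt N}$ once one uses $N\ge1$ and $\log(2/\delta)\ge\log2$ to merge the two terms.

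For \eqref{eq:g_estimate} I would invoke the noise model \eqref{eq:noise_model} to write $Y_i=\A_i u^++\varepsilon_i$ and split
\begin{align*}
\frac1N\sum_{i=1}^N Y_i(\A_i u_r)=\frac1N\sum_{i=1}^N(\A_i u^+)(\A_i u_r)+\frac1N\sum_{i=1}^N\varepsilon_i(\A_i u_r).
\end{align*}
The first sum is handled exactly as in the previous step with $u_{r'}$ replaced by $u^+$. The second sum has mean zero because $\varepsilon$ is independent of $X$ with $\EXP\varepsilon=0$; setting $\eta(\omega)=\varepsilon(\omega)(\A u_r)(\omega)$ and using independence together with the uniform bound above, under the moment assumption \eqref{eq:noise_2} one gets $\EXP\abs{\eta}^{\tilde{m}}\le\tfrac12\sigma^2(\tilde{\kappa}C_R)^2\,\tilde{m}!\,(M\tilde{\kappa}C_R)^{\tilde{m}-2}$, so Lemma \ref{lem:concentration} with $u=M\tilde{\kappa}C_R$, $v=\sigma^2(\tilde{\kappa}C_R)^2$ again yields a bound of order $\tfrac{\log(2/\delta)}{\sqrt N}$; the triangle inequality then gives \eqref{eq:g_estimate}. (Under the weaker noise assumption \eqref{eq:noise_1} one applies Tschebyshev's inequality to this term instead, as in the proof of Lemma \ref{lem:op_est_2_3}, at the cost of a factor $\delta^{-1}$ in place of $\log(2/\delta)$.) Finally, to get all $R^2+R$ estimates simultaneously I would rerun the argument with $\delta$ replaced by $\delta/(R^2+R)$ and take a union bound; since $\log(2(R^2+R)/\delta)\le(1+\log(R^2+R)/\log2)\log(2/\delta)$, the extra factor is absorbed into the generic constant, which may depend on $R,\kappa,p,C_R,\sigma,M$ but not on $N$ or $\delta$.

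The argument is routine throughout; the one place that requires genuine care is the possibly unbounded noise term $\frac1N\sum_i\varepsilon_i(\A_i u_r)$ in \eqref{eq:g_estimate}, where one must use the independence of $\varepsilon$ and $X$ to factor the moments before feeding the resulting Bernstein-type estimate into Lemma \ref{lem:concentration}. This is also why the clean $\log(2/\delta)$ form of the bound — and, downstream, the $\log^2(1/\delta)$ dependence in Theorem \ref{thm:main} — is obtained most naturally under the stronger moment condition \eqref{eq:noise_2}.
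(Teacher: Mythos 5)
Your proof is correct, but it takes a genuinely different route from the paper's. The paper does not redo scalar concentration entry by entry: it rewrites $\frac1N\sum_i \A_i u_r\,\A_i u_{r'}=\innerpro{[\A^*\A]_N u_r,u_{r'}}_{\CalL^2}$ and $\frac1N\sum_i Y_i\,\A_i u_r=\innerpro{[\A^*Y]_N,u_r}_{\CalL^2}$, and then bounds the discrepancies by Cauchy--Schwarz from the already-established operator-level estimates, namely $\norm{\A^*\A-[\A^*\A]_N}_{\CalL^2\to\CalL^2}\le 4\tilde\kappa^2 N^{-1/2}\log(2/\delta)$ from Lemma \ref{lem:op_est_0} and the bound on $\norm{[\A^*\A]_N u^+-[\A^*Y]_N}_{\CalL^2}$ from Lemma \ref{lem:op_est_2_3}, combined with $\norm{u_r}_{\CalL^2},\norm{u^+}_{\CalL^2}\le C_R$. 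The payoff of that route is that a single high-probability event controls all $R^2+R$ entries simultaneously, so no union bound over $(r,r')$ is needed and the constant is genuinely independent of $R$ --- a fact the paper later exploits in the proof of Corollary \ref{cor:main}. Your entrywise argument via Lemma \ref{lem:concentration} with $\Hs=\Real$ is more elementary and self-contained, and your bookkeeping is sound: the uniform bound $\abs{(\A u_r)(\omega)}\le\tilde\kappa C_R$, the verification of the moment hypothesis for the bounded variable $\zeta$, the factorization of the moments of $\varepsilon(\omega)(\A u_r)(\omega)$ by independence, and the merging of the two Bernstein-type terms are all correct. The price is the union bound, which injects a $\log(R^2+R)$ factor into the constant and would mildly worsen the explicit $R$-dependence tracked in Corollary \ref{cor:main} (harmless for the lemma as stated, since the constant is only required to be independent of $N$ and $\delta$). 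One further point in your favor: you correctly flag that under the weaker noise condition \eqref{eq:noise_1} only a Chebyshev-type bound with $\delta^{-1}$ in place of $\log(2/\delta)$ is available for the noise contribution; the paper's own proof glosses over this when it invokes \eqref{eq:op_est_2} yet still writes the bound for its term $(II)$ with a logarithmic $\delta$-dependence.
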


\begin{proof}
Let us start by showing \eqref{eq:G_estimate}. 
Observe that in view of \eqref{eq:def_a*a_empirical} we have
\begin{align*}
\frac1N \sum_{i=1}^N \A_i u_r \A_i u_{r'}&=\frac1N \sum_{i=1}^N \innerpro{\X_i, u_r}_{\CalL^2} \innerpro{\X_i, u_{r'}}_{\CalL^2}
= \innerpro{[\A^* \A]_N u_r,u_{r'}}_{\CalL^2},
\end{align*}
Then
\begin{align*}
    \innerpro{\A^* \A u_r, u_{r'}}_{\CalL^2}=\innerpro{[\A^* \A]_N u_r, u_{r'}}_{\CalL^2}+\innerpro{(\A^* \A-[\A^* \A]_N) u_r, u_{r'}}_{\CalL^2},
\end{align*}
and it remains to estimate the last term to arrive at \eqref{eq:G_estimate}:
\begin{align*}
\abs{\innerpro{(\A^* \A-[\A^* \A]_N) u_r, u_{r'}}_{\CalL^2}} &\le\norm{ \A^* \A -[\A^* \A]_N  }_{\CalL^2 \to \CalL^2} \norm{u_r}_{\CalL^2} \norm{u_{r'}}_{\CalL^2}  \\ &\le C_R^2 \frac{4 \tilde{\kappa}^2}{\sqrt{N}} \log \frac{2}{\delta},
\end{align*}
where we used Cauchy-Schwartz inequality, Lemma \ref{lem:op_est_0} and Assumption \ref{ass:unif_bound_models}.

Now let us deal with \eqref{eq:g_estimate}. 
It is clear from \eqref{eq:ay_relation} that
\begin{align*}
    \innerpro{[\A^* Y]_N,u_r}_{\CalL^2} = \frac1N \sum_{i=1}^N Y_i \innerpro{\X_i, u_r}_{\CalL^2}=\frac1N \sum_{i=1}^N Y_i \A_i u_r.
\end{align*}
Then we can continue as follows:
\begin{align*}
   \innerpro{\A^* \A u^+, u_r}_{\CalL^2}=\innerpro{[\A^* Y]_N,u_r}_{\CalL^2}&+\underbrace{\innerpro{(\A^* \A-[\A^* \A]_N) u^+, u_r}_{\CalL^2}}_{(I)}\\ &+\underbrace{\innerpro{[\A^* \A]_N u^+ -[\A^* Y]_N, u_r}_{\CalL^2}}_{(II)}.
\end{align*}
For (I) we apply Lemma \ref{lem:op_est_0}, Assumption \ref{ass:unif_bound_models} and Cauchy-Schwartz inequality to obtain the bound:
\begin{align*}
(I) \le C_R^2 \frac{4 \tilde{\kappa}^2}{\sqrt{N}} \log \frac{2}{\delta},
\end{align*}
whereas for (II) we use \eqref{eq:op_est_2} or \eqref{eq:op_est_3} and again Assumption \ref{ass:unif_bound_models} and Cauchy-Schwartz to have:
\begin{align*}
(II) \le C_R \frac{C}{\sqrt{N} }\log \frac{2}{\delta},
\end{align*}
where the constant $C$ may be different, depending on whether noise assumption \eqref{eq:noise_1} or \eqref{eq:noise_2} is in force.
Now \eqref{eq:g_estimate} follows by combining (I) and (II). 
\end{proof}

Now we can use similar arguments as used, e.g., in the proof of Theorem 1 of \cite{dinu2023addressing}. In the sequel, $\norm{.}_{\Real^R}$ and $\norm{.}_{\Real^R \to \Real^R}$ denote the usual Euclidean and the Frobenius norm, respectively. From Lemma \ref{lem:g_and_G_est} we can argue that with probability $1-\delta$ it holds:
\begin{align}
&\norm{\bar{g}-\tilde{g}}_{\Real^R }\le  C \log \frac{1}{\delta} N^{-\frac{1}{2}}  \label{eq:g_diff_bound}, \\
&\norm{G-\tilde{G}}_{\Real^R \to \Real^R}\le C \log \frac{1}{\delta}  N^{-\frac{1}{2}} \label{eq:G_diff_bound}.
\end{align}

It is also straightforward to bound the entries of $\tilde{G}$ uniformly:
\begin{align*}
    |\tilde{G}_{r,r'}| 
    \le \norm{ [\A^* \A]_N  }_{\CalL^2 \to \CalL^2} \norm{u_r}_{\CalL^2} \norm{u_{r'}}_{\CalL^2} \le  \tilde{\kappa}^2 C_R^2.
\end{align*}
Moreover we can use the following simple manipulation:
\begin{align*}
    G^{-1}=\tilde{G}^{-1}(G\tilde{G}^{-1})^{-1}=\tilde{G}^{-1}(I-(I-G\tilde{G}^{-1}))^{-1}=\tilde{G}^{-1}(I-(\tilde{G}-G)\tilde{G}^{-1})^{-1}.
\end{align*}
Then using the Neumann series for $(I-(\tilde{G}-G)\tilde{G}^{-1})^{-1}$ we obtain the following bound:
\begin{align}
\left\|G^{-1}\right\|_{\Real^R \to \Real^R} \leq \frac{\left\|\tilde{G}^{-1}\right\|_{\Real^R \to \Real^R}}{1-\left\|\tilde{G}^{-1}\right\|_{\Real^R \to \Real^R}\left\|G-\tilde{G} \right\| _{\Real^R \to \Real^R}} \le c.
\label{eq:G_inv_bound}
\end{align}
To see \eqref{eq:G_inv_bound}, we first observe that it is natural to assume that $\norm{\tilde{G}^{-1}} \le c$ for some generic $c>0$ (otherwise, we can, e.g., orthogonalize our models and coefficients without changing the aggregation, but with reducing the condition number $\norm{\tilde{G}^{-1}}\norm{\tilde{G}}$). Secondly, by \eqref{eq:G_diff_bound} it is also natural to assume that $\norm{G-\tilde{G}}_{\Real^R \to \Real^R}  < \frac{1}{2c}$ by choosing $N$ sufficiently large. Therefore the Neumann series associated to $(I-(\tilde{G}-G)\tilde{G}^{-1})^{-1}$ converges, since $\norm{(\tilde{G}-G)\tilde{G}^{-1}}_{\Real^R \to \Real^R} < \frac12$. This allows to deduce $\left\|G^{-1}\right\|_{\Real^R \to \Real^R} \le 2c$.
Now we are in the position to prove our main generalization bound \eqref{eq:main_gen_bound}:

\begin{proof}[Proof of Theorem \ref{thm:main}]
Since:
\begin{align*}
    G^{-1}(\tilde{g}-\bar{g})+G^{-1}(G-\tilde{G})\tilde{c}=G^{-1} \tilde{g}-c^*+\tilde{c}-G^{-1}\tilde{g}=\tilde{c}-c^*,
\end{align*}
we can use \eqref{eq:g_diff_bound}--\eqref{eq:G_inv_bound} and Hölder’s inequality 
to claim that for sufficiently large $N$ with probability $1-\delta$ it holds

\begin{align}
\|\tilde{c}-c^*\|^2_{\mathbb{R}^{R}} & \leq 2 \left\|G^{-1}\right\|_{\Real^R \to \Real^R}^2 \left(\|\tilde{g}-\bar{g}\|_{\mathbb{R}^{R}}^2+\|G-\tilde{G}\|_{\Real^R \to \Real^R}^2 \|\tilde{c}\|_{\mathbb{R}^{R}}^2\right) \nonumber \\ &\le C N^{-1} \log^2 \frac{1}{\delta}   \label{eq:c_diff}
\end{align}
Moreover:
\begin{align}
\mathcal{E}(\tilde{u}) - \mathcal{E}(u^+)&=\norm{\sqrt{\A^* \A} (\tilde{u} - u^+)}_{\CalL^2}^2 \nonumber \\
&\le  \left( \norm{\sqrt{\A^* \A} (u^*-u^+)}_{\CalL^2} + \norm{\sqrt{\A^* \A} (\tilde{u} - u^*)}_{\CalL^2} \right)^2 \nonumber \\
&\le  2 \norm{\sqrt{\A^* \A} (u^*-u^+)}_{\CalL^2}^2 + 2 \norm{\sqrt{\A^* \A} (\tilde{u} - u^*)}_{\CalL^2}^2 \nonumber \\
&=  2    \left(\mathcal{E} \left(u^* \right) - \mathcal{E} (u^+) \right) + 2 \norm{\sqrt{\A^* \A} (\tilde{u} - u^*)}_{\CalL^2}^2 \nonumber \\
&\le 2    \left(\mathcal{E} \left(u^* \right) - \mathcal{E} (u^+) \right) + 2 \left( \sum_{r=1}^R |c^*_k-\tilde{c}_r| \norm{\sqrt{\A^* \A} u_r}_{\CalL^2} \right)^2  \nonumber\\
& \leq2    \left(\mathcal{E} \left(u^* \right) - \mathcal{E} (u^+) \right)+2 R\|c^*-\tilde{c}\|_{\mathbb{R}^R}^2 \max _{r}\left\| \sqrt{\A^* \A} u_r\right\|_{{\CalL^2}}^2 \nonumber \\
& \leq2    \left(\mathcal{E} \left(u^* \right) - \mathcal{E} (u^+) \right)+2 R \norm{\sqrt{\A^* \A}}_{\CalL^2 \to \CalL^2}^2 C_R^2 \|c^*-\tilde{c}\|_{\mathbb{R}^{R}}^2 \label{eq:final_bound},
\end{align}
The statement of the theorem follows now from \eqref{eq:c_diff}--\eqref{eq:final_bound}.
\end{proof}

Some remarks on the interpretation of the main theorem are in order:

\begin{remark}
From a theoretical perspective, we observe that the optimal aggregation $u^*$ of several models consistently yields improved results compared to relying on any single model. Indeed, we have:
\begin{align*}
\min_{c_1,\dots,c_R\in \mathbb{R}} \left\| \sqrt{\A^* \A} \left( \sum_{r=1}^R c_{r} u_{r} \right) \right\|_{\CalL^2}^2 
&\le \min_{\substack{c_i = 1,\; c_j = 0 \\ i = 1, \cdots, R,\; j = 1, \cdots, i-1, i+1, \cdots, R}} \left\| \sqrt{\A^* \A} \left( \sum_{r=1}^R c_{r} u_{r} \right) \right\|_{\CalL^2}^2 \\
&= \min_{r=1,\dots, R} \left\| \sqrt{\A^* \A} u_r \right\|_{\CalL^2}^2.
\end{align*}
Since MP regularization allows for a more flexible model class than single-parameter regularization, and since we compute an approximate aggregation $\tilde{u}$ which provably converges to $u^*$ as $N \to \infty$, it is natural to expect that our proposed aggregation approach, which combines models with varying parameter choices, will outperform the method proposed in \cite{holzleitner2023regularized}. This expected improvement is also confirmed by our experimental results.
\end{remark}

\begin{remark}
The choice of $R$ remains an important aspect, which is primarily determined by the specific task and practical constraints. In particular, $R$ is often limited by the available computational resources, as computing the individual models may be expensive. In practice, $R$ typically remains moderate, commonly ranging between $10$ and $50$ (see, e.g., \cite{dinu2023addressing} as well as our experiments in the subsequent sections). From a theoretical standpoint, we examine the influence of $R$ on the error bounds established in Theorem \ref{thm:main} in the following corollary.
\end{remark}

\begin{corollary}
\label{cor:main}
Under the same assumptions as in Theorem \ref{thm:main} with probability $1-\delta$ it holds that for sufficiently large $N$
\begin{align}
&\mathcal{E}(\tilde{u})- \mathcal{E}(u^+)
\leq 2  \left(\mathcal{E}(u^*)- \mathcal{E}(u^+) \right)
+C  R^3 N^{-1} \log^2 \frac{1}{\delta}, 
\label{eq:main_gen_bound_with_R}
\end{align}
where the coefficient $C>0$ does not depend on $N$, $R$ and $\delta$.
\end{corollary}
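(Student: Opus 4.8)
The plan is to revisit the proof of Theorem \ref{thm:main} and track explicitly how each estimate depends on $R$, since the generic constant $C$ in \eqref{eq:main_gen_bound} actually absorbs several $R$-dependent factors. First I would re-examine Lemma \ref{lem:g_and_G_est}: its entrywise bounds \eqref{eq:G_estimate}--\eqref{eq:g_estimate} hold with a constant independent of $R$ (they only involve $\tilde\kappa$, $C_R$ and the noise parameters). Passing from the entrywise bounds to the vector/Frobenius-norm bounds \eqref{eq:g_diff_bound}--\eqref{eq:G_diff_bound}, however, costs a factor: $\norm{\bar g-\tilde g}_{\Real^R}^2=\sum_{r=1}^R|\bar g_r-\tilde g_r|^2$ picks up a factor $R$, hence $\norm{\bar g-\tilde g}_{\Real^R}\le C\sqrt R\,\log(1/\delta)N^{-1/2}$, and similarly $\norm{G-\tilde G}_{\Real^R\to\Real^R}$ (Frobenius norm, $R^2$ entries) picks up a factor $R$, i.e. $\le CR\,\log(1/\delta)N^{-1/2}$.

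Next I would propagate these refined bounds through \eqref{eq:G_inv_bound} and \eqref{eq:c_diff}. For \eqref{eq:G_inv_bound} the point is that $\norm{\tilde G^{-1}}\le c$ is still assumed (the orthogonalization remark is $R$-independent in spirit), but to make the Neumann series converge we now need $\norm{G-\tilde G}_{\Real^R\to\Real^R}<\frac{1}{2c}$, which by the refined bound requires $N$ large enough that $CR\log(1/\delta)N^{-1/2}<\frac{1}{2c}$; this is still "sufficiently large $N$", so $\norm{G^{-1}}\le 2c$ as before, with $c$ independent of $R$. Plugging into \eqref{eq:c_diff}: the term $\norm{\tilde g-\bar g}_{\Real^R}^2$ contributes $R\cdot N^{-1}\log^2(1/\delta)$, and the term $\norm{G-\tilde G}_{\Real^R\to\Real^R}^2\norm{\tilde c}_{\Real^R}^2$ contributes $R^2\cdot N^{-1}\log^2(1/\delta)\cdot\norm{\tilde c}_{\Real^R}^2$. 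Here one must also control $\norm{\tilde c}_{\Real^R}$: from $\tilde c=\tilde G^{-1}\tilde g$ with $\norm{\tilde G^{-1}}\le c$ and $\norm{\tilde g}_{\Real^R}^2=\sum_r|\tilde g_r|^2\le R\max_r|\tilde g_r|^2\le R\,\tilde\kappa^2 C_R^2$ (using the uniform entry bound analogous to the one given for $\tilde G$), we get $\norm{\tilde c}_{\Real^R}^2\le c^2\tilde\kappa^2 C_R^2\,R$. Hence the second contribution to \eqref{eq:c_diff} is of order $R^2\cdot R\cdot N^{-1}\log^2(1/\delta)=R^3 N^{-1}\log^2(1/\delta)$, which dominates, giving $\norm{\tilde c-c^*}_{\Real^R}^2\le CR^3 N^{-1}\log^2(1/\delta)$ with $C$ independent of $N,R,\delta$.

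Finally I would revisit the chain \eqref{eq:final_bound}: its last two lines already carry an explicit factor $R$, namely $2R\norm{\sqrt{\A^*\A}}_{\CalL^2\to\CalL^2}^2 C_R^2\norm{c^*-\tilde c}_{\Real^R}^2$. Combining this factor $R$ with the refined bound $\norm{\tilde c-c^*}_{\Real^R}^2\le CR^3N^{-1}\log^2(1/\delta)$ would a priori yield $R^4$; so to obtain the claimed $R^3$ I would instead avoid the crude step $\big(\sum_r|c_r^*-\tilde c_r|\,\norm{\sqrt{\A^*\A}u_r}\big)^2\le R\norm{c^*-\tilde c}_{\Real^R}^2\max_r\norm{\sqrt{\A^*\A}u_r}^2$ and keep it as $\big(\sum_r|c_r^*-\tilde c_r|\,\norm{\sqrt{\A^*\A}u_r}\big)^2\le\norm{c^*-\tilde c}_{\Real^R}^2\sum_r\norm{\sqrt{\A^*\A}u_r}^2\le\norm{c^*-\tilde c}_{\Real^R}^2\,\norm{\sqrt{\A^*\A}}^2\sum_r\norm{u_r}^2\le R\,\norm{\sqrt{\A^*\A}}^2 C_R^2\,\norm{c^*-\tilde c}_{\Real^R}^2$ — the same factor $R$, so this does not itself save a power. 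The actual saving comes from being less wasteful earlier: in \eqref{eq:c_diff} the dominant $R^3$ term came from bounding $\norm{\tilde c}_{\Real^R}^2$ by $O(R)$; but $\norm{\tilde c}_{\Real^R}$ is close to $\norm{c^*}_{\Real^R}$, which is an $R$-independent quantity attached to the fixed target $u^+$, so one may instead write $\norm{\tilde c}_{\Real^R}\le\norm{c^*}_{\Real^R}+\norm{\tilde c-c^*}_{\Real^R}$ and absorb, yielding $\norm{\tilde c-c^*}_{\Real^R}^2\le C(R+R^2)N^{-1}\log^2(1/\delta)\le CR^2N^{-1}\log^2(1/\delta)$; multiplying by the explicit $R$ from \eqref{eq:final_bound} then gives exactly the $R^3N^{-1}\log^2(1/\delta)$ of \eqref{eq:main_gen_bound_with_R}. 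The main obstacle is precisely this bookkeeping: making sure that the only genuinely $R$-dependent quantities are the dimension counts in the vector/Frobenius norms and the single explicit $R$ in \eqref{eq:final_bound}, while $\norm{c^*}_{\Real^R}$, $\norm{\tilde G^{-1}}$, $C_R$, $\tilde\kappa$ and the noise constants are treated as $R$-independent, and verifying that "sufficiently large $N$" now quantifies as $N\gtrsim R^2$ (from the Neumann-series condition), which should be stated.
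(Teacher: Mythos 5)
Your proposal is correct and follows essentially the same route as the paper's proof: track the $R$-dependence through the passage from entrywise to Euclidean/Frobenius norms (a factor $\sqrt{R}$ for $\tilde g-\bar g$ and $R$ for $G-\tilde G$), keep $\norm{G^{-1}}$ and the noise constants $R$-free, treat $\norm{\tilde c}_{\Real^R}$ as $O(1)$ in $R$, and multiply the resulting bound $\norm{c^*-\tilde c}_{\Real^R}^2\le C R^2 N^{-1}\log^2\frac{1}{\delta}$ by the explicit factor $R$ in \eqref{eq:final_bound} to get $R^3$. Your justification of the $O(1)$ bound on $\norm{\tilde c}_{\Real^R}$ via the triangle inequality with $c^*$ and absorption is in fact more careful than the paper's bare assertion that $\tilde c$ is ``determined by the data,'' and your remark that the Neumann-series condition quantifies ``sufficiently large $N$'' as $N\gtrsim R^2$ is a useful point the paper leaves implicit.
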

\begin{proof}
To analyze the dependence on $R$, we consider the second term in the last line of \eqref{eq:final_bound}. The term $\norm{\sqrt{\A^* \A}}_{\CalL^2 \to \CalL^2}^2$ involves the target operator and is independent of the number of models $R$. Thus, it remains to analyze the term $\|c^*-\tilde{c}\|_{\mathbb{R}^{R}}^2$. For this purpose, we revisit the individual factors appearing in the first inequality of \eqref{eq:c_diff}. Throughout the proof, $C > 0$ denotes a generic absolute constant, independent of $R$, $N$, and $\delta$.

\paragraph{Bound on $\|G-\tilde{G}\|_{\mathcal{L}(\mathbb{R}^R)}$:}  
By Lemma \ref{lem:g_and_G_est}, each entry of the matrix $G - \tilde{G}$ is bounded in absolute value. The proof of this lemma relies only on norm bounds for the corresponding sampling operators and on the uniform bound $C_R$ for all models. Therefore, the constant $C$ is independent of $R$. Using the definition of the Frobenius norm and standard norm inequalities, we obtain
\begin{equation} \label{eq:G_diff_bound_with_l}
\|G-\tilde{G}\|_{\mathcal{L}(\mathbb{R}^R)} \le C R N^{-\frac{1}{2}} \log^{\frac{1}{2}} \frac{1}{\delta}.
\end{equation}

\paragraph{Bound on $\|\tilde{g}-\bar{g}\|_{\mathbb{R}^{R}}$:}  
Applying similar arguments, we obtain
\[
\|\tilde{g}-\bar{g}\|_{\mathbb{R}^{R}} \le C \sqrt{R} N^{-\frac{1}{2}} \log^{\frac{1}{2}} \frac{1}{\delta} .
\]

\paragraph{Bound on $\|G^{-1}\|_{\mathcal{L}(\mathbb{R}^R)}$:}  
The arguments leading to \eqref{eq:G_inv_bound} ensure that the constant involved can be chosen independently of $R$.

\paragraph{Bound on $\|\tilde{c}\|_{\mathbb{R}^{R}}$:}  
Since this quantity is fully determined by the observed data, it can be regarded as independent of $R$.

\medskip

Combining these estimates, we obtain
\[
\|c^*-\tilde{c}\|_{\mathbb{R}^{R}}^2 \le C R^2 N^{-1} \log \frac{1}{\delta},
\]
which establishes the refined bound stated in \eqref{eq:main_gen_bound_with_R}.
\end{proof}

\section{Experimental evaluation}
\subsection{Toy example} \label{sec:empirical_toy}
In this section we a toy example to demonstrate the advantage of MP regularization and aggregation.
To this end, as an explanatory variable, we consider a random process
\begin{align*}
X(\omega, t)=\sum_{k=0}^5 \xi_k(\omega) \cos (k t), t \in[0,2 \pi],
\end{align*}
where $\xi_k(\omega)$ are random variables uniformly distributed on $[-1,1]$.
Consider also the response variable $Y(\omega)$ related to the explanatory variable $X(\omega, t)$  as follows:

\begin{align*}
 Y(\omega)&=u_0^+ +\int_0^{2 \pi} X(\omega, t) u_1^+(t) d \mu(t) +\int_0^{2 \pi} \int_0^{2 \pi} X(\omega, t) X(\omega, \tau) u_2^+(t, \tau) d \mu(t) d\mu( \tau) .
\end{align*}
In our simulations, we use $u^{+}=\left(u_0^{+}, u_1^{+}, u_2^{+}\right)$ with
\begin{align*}
u_0^{+}=2, u_1^{+}=1+4 \cos t+\cos 5 t, u_2^{+}=\cos 3 t+\cos 2 t \cos 2 \tau.
\end{align*}

We simulate $N$ independent samples  $(Y_i, X_i(\cdot))$of $(Y(\omega), X(\omega, t))$ and use them to construct the regularized quadratic approximation
$u_{\blam}^N= (u_{\blam,0}^N,u_{\blam,1}^N,u_{\lambda,2}^N)$ of $u^{+}=\left(u_0^{+}, u_1^{+}, u_2^{+}\right)$ by MP regularization as described in Section \ref{sec:multi}, for 27 different values of $\blam$, so that all possible choices of $\lambda_0, \lambda_1, \lambda_2 \in \left\{ 10^{-5}, 10^{-7}, 10^{-9} \right\}$ are considered.

On Figure \ref{fig1} we plot the error $\norm{u^{+}-u_{\blam}^N}_{\CalL^2}$ against the number of the used samples $N=1,2, \ldots, 40$, for these 27 choices of $\blam$. In several cases (e.g. $\lambda_0=\lambda_1=10^{-9}, \lambda_2=10^{-7}$) it is clearly visible that choosing different values of $\lambda_0, \lambda_1, \lambda_2$ can be advantageous compared to the case of one-parameter regularization $\lambda_0=\lambda_1=\lambda_2$. We also observe that the error curves corresponding to all the computed models saturate at low values (roughly $\sim 3.14$) for  $N \geq 27$.

Next, to see the advantage of combining all the 27 computed models in terms of an aggregation as discussed in Section \ref{sec:agg}, in Figure \ref{fig2} we even observe saturation at $ \sim 3.14$ already at $N \geq 21$. Let us also mention, that we provided an implementation in Pytorch \cite{paszke2019pytorch}. This allows the code to leverage GPU acceleration, enabling fast computation of the involved integrals.
The for this toy-example is available from the following git-repository: \url{https://github.com/markush314/Polynomial-functional-regression}.

These results look promising, therefore in order to underpin the usefulness of our method, we show experiments on real world medical data in the next subsection.

\begin{figure}
\vspace{-3cm}
\centerline{\includegraphics[width=1.2\textwidth]{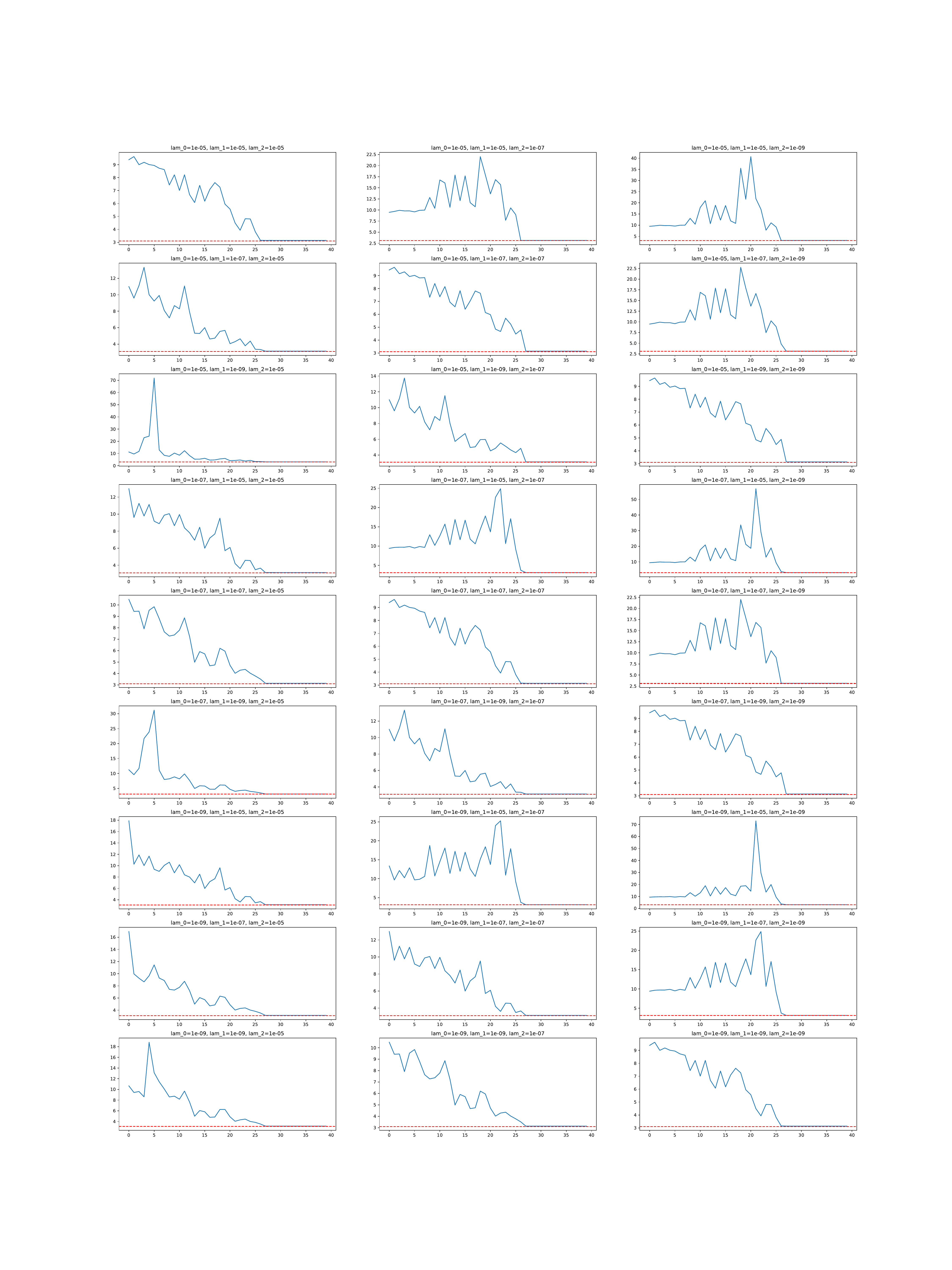}}
\vspace{-1cm}
\caption{Error curves for all possible choices of $\blam$. Red line depicts error rate of $3.14$}
\label{fig1}
\end{figure}

\begin{figure}
\centerline{

\includegraphics[width=0.8\textwidth]{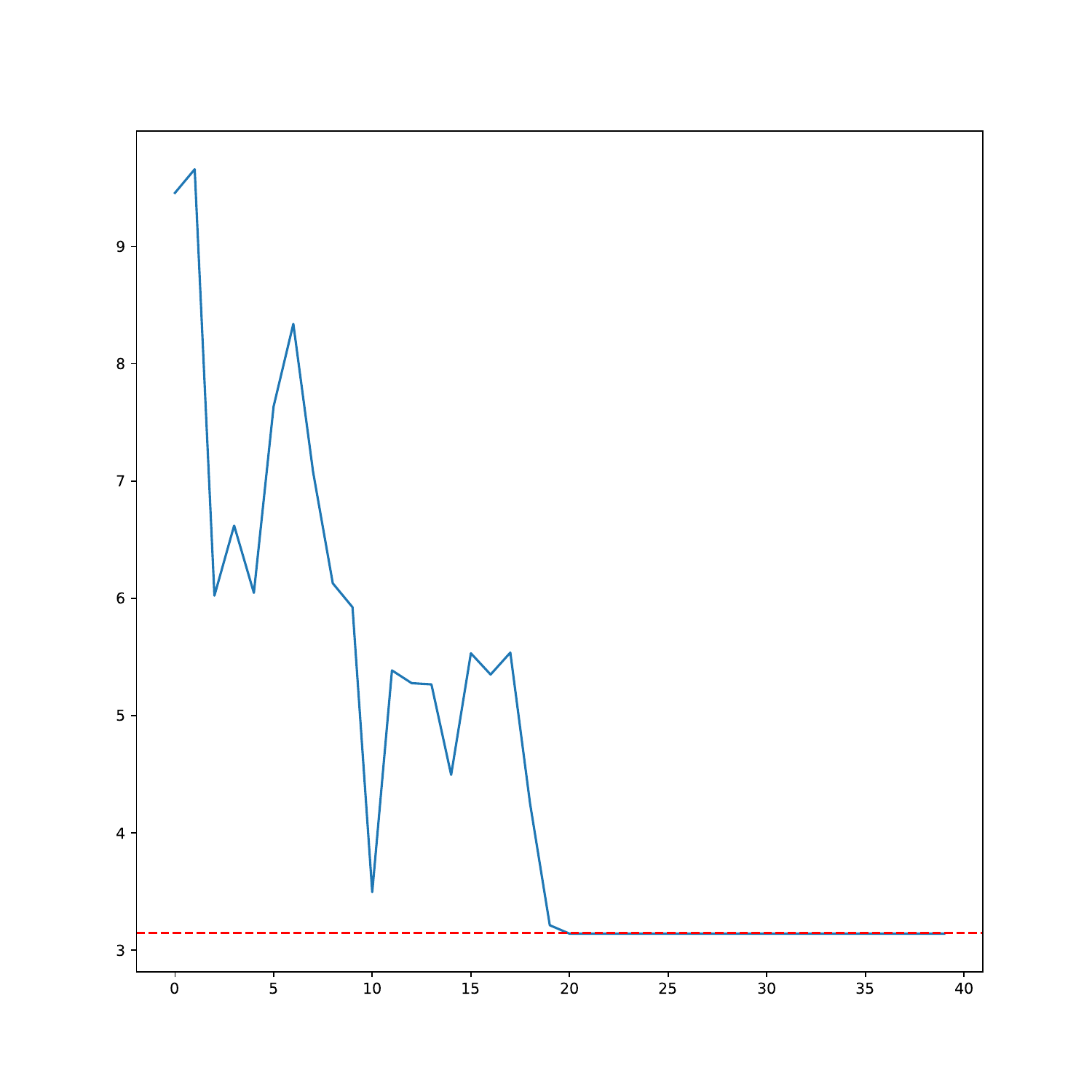}
}
\caption{Error curve for aggregation. Red line depicts error rate of $3.14$}
\label{fig2}
\end{figure}

\subsection{Stenosis Data} \label{sec:empirical_stenosis}

In this section, we demonstrate an application of MP-PFR and the associated aggregation approach presented in Sections \ref{sec:multi}, \ref{sec:agg} to the problem of automatic stenosis detection from lumen diameters. Stenosis refers to an abnormal narrowing of a blood vessel due to a lesion, leading to a reduction in the lumen's space. This pathology is particularly critical in cervical arteries, including internal carotid arteries (ICA) and vertebral arteries (VA), where stenosis can impede or block blood flow to the brain, significantly elevating the risk of a stroke. Consequently, the automatic detection of stenosis becomes a crucial challenge in neuroradiology.

This detection issue typically arises in the final or quantification stage of computerized tomography (CT) or magnetic resonance imaging (MRI) angiography, when the vessel lumen segmentation and centerline extraction have already been executed. The detection mentioned above is the result of all work in the earlier stages, and therefore deserves special attention.
Following the segmentation of CT/MRI scans, the existing software facilitates the estimation of vessel cross-section diameters, denoted as $d_s$ ($s = 1, 2, ...,$ approximately $500$), at various positions $t_s$ along the vessel centerlines. Given the variability in positions $t_s$ and their total numbers across different patients, it is natural to organize these data in the form of functions $X(t)$. For instance, cubic interpolation splines with knots at $t_s$ can describe the diameter variation, with the values $X(t_s)$ corresponding to $d_s$ ($s =1,2,...$). This approach allows clinical data to be represented as a samples $(X_i, Y_i)$ consisting of functional inputs $X_i = X_i(t)$ ($i = 1, 2, ..., N$) labeled by outputs $Y_i$ which are assigned the value of $0$ for a diagnosis indicating no stenosis, and values of $0.25$, $0.5$, $0.75$, $1.0$ for diagnoses representing light, medium, moderate, or high stenosis, respectively. With the use of this dataset, a predictor can be constructed to automatically detect the presence or absence of stenosis by assigning an appropriate label $Y$ to
the corresponding profile $X(t)$ of variations in vessel cross-section diameters.

We have permission for research-driven secondary use of anonymized clinical data collected at the Department of Radiology and Department of Neurology, Medical University of Innsbruck, within the ReSect-study \cite{mayer2019local}. In our experiments below, we use the data about $N=40$ ICA. The available data sample contains only $7$ arteries affected by stenosis, and we need to ensure their inclusion in both the training and test sets. To achieve this, we opt for a random train-test split, so that the training set will consistently comprise of data of $16$ ICA without stenosis and $4$ ICA with stenosis, while the test set will consist of data of $17$ non-stenosis arteries and $3$ stenosis-affected ones.

Recall that in the present context, the variables $X(t)$ are functions of the position $t$ along vessel centerlines, and the lengths of that centerlines vary from patient to patient. Therefore, in order to compute the integrals required in the algorithms of Sections \ref{sec:multi} and \ref{sec:agg}, we confine the inputs to a specific interval $\I=[0,b]$,  where $b$ is the minimum length observed in the available clinical data. In our experiments, we use $b = 140$ mm. Moreover, let us also mention that similar to Section \ref{sec:empirical_toy}, also here we use Pytorch \cite{paszke2019pytorch} for our implementations, so that the code is GPU-compatible and allows for fast computation of the involved integrals.

We construct the models $ u_{\blam}$ as described in \eqref{eq:b_0_coeff}--\eqref{eq:b_ki_coeff}, for both linear and quadratic functional regression, and for all possible choices of  $\lambda_0, \lambda_1, \lambda_2 \in \left\{ 10^{-2}, 10^{-1}, 1 \right\}$. Afterward, we compute an aggregation $\tilde{u}$ of all $u_{\blam}$ corresponding to the different choices of $\blam$, again both for linear and  quadratic case. Hereby we use the approach from Section \ref{sec:agg}, i.e. we solve the system associated to \eqref{eq:tildeG_def}--\eqref{eq:tildeg_def} and then combine the aggregation function \eqref{eq:utilde_def}. For a given functional data sample $X_i$, the predicted value is then computed as $f(X_i)=\A_i u_{\blam}$, or $f(X_i)=\A_i \tilde{u}$, respectively. Note also that when a continuous-valued predictor $f(X_j)$ is used as binary classifier, its diagnostic ability depends on the so-called discrimination threshold $c$, such
that a particular artery corresponding to an input $X_i$ is assumed to be affected by stenosis if $f(X_i)>c$. In our experiments, we choose $c=0.5$.

It is known that in medical statistics the accuracy of prediction of the presence or absence of a medical condition is mathematically described in terms of sensitivity (SE) and specificity (SP). Recall that 
SE is calculated as $\frac{\text{TP}}{\text{TP + FN}}$, while $SP = \frac{\text{TN}}{\text{TN + FP}}$. Here, TP represents the instances where a stenosis in the examined artery is identified by both the reference standard and the algorithm, irrespective of its severity (given the preventive measures for even mild narrowing of the cervical artery). TN accounts for cases where no stenosis in the considered artery is detected by both the reference standard and the algorithm. Meanwhile, FN and FP denote the respective counts of cases where the algorithm incorrectly identifies the absence or presence of stenosis.

The diagnostic efficacy of a specific classifier can also be effectively evaluated using the receiver/relative operating characteristic (ROC) curve. This graphical representation illustrates the diagnostic performance of $f$ across varying discrimination thresholds. The ROC curve is constructed by plotting the sensitivity (SE) against the complement of specificity ($1-\text{SP}$) for different threshold settings. 

The outcomes of ROC analysis can be succinctly summarized using a single metric, namely the area under the ROC curve (AUC). The AUC ranges from approximately $0.5$ for randomly assigned diagnoses to $1.0$, indicating perfect diagnostic classification. In the subsequent analysis, we present the performance of the considered classifiers based on test inputs, considering all the aforementioned metrics and assuming that all classifiers use the same threshold $c=0.5$. 

Our results for the linear and quadratic case are depicted in Tables \ref{tab:linear} and \ref{tab:quad}. We report the performance measure as an average over $10$ runs, both linear, quadratic and aggregated models use the same data for training and testing in each run, so that a fair comparison is provided. 
\begin{table}[h]
\centering
\begin{tabular}{|l|c|c|c|}
\hline
Parameters & SE & SP & AUC \\
\hline
$\lambda_0=1$, $\lambda_1=1$ & 0.333333 & 0.988235 & 0.915686 \\
\hline
$\lambda_0=1$, $\lambda_1=0.1$ & 0.4 & 0.952941 & 0.833333 \\
\hline
$\lambda_0=1$, $\lambda_1=0.01$ & 0.333333 & 0.911765 & 0.668627 \\
\hline
$\lambda_0=0.1$, $\lambda_1=1$ & 0.333333 & 1 & 0.915686 \\
\hline
$\lambda_0=0.1$, $\lambda_1=0.1$ & 0.4 & 0.958824 & 0.833333 \\
\hline
$\lambda_0=0.1$, $\lambda_1=0.01$ & 0.333333 & 0.911765 & 0.668627 \\
\hline
$\lambda_0=0.01$, $\lambda_1=1$ & 0.266667 & 1 & 0.933333 \\
\hline
$\lambda_0=0.01$, $\lambda_1=0.1$ & 0.366667 & 0.964706 & 0.839216 \\
\hline
$\lambda_0=0.01$, $\lambda_1=0.01$ & 0.333333 & 0.923529 & 0.67451 \\
\hline
Aggregation & 0.6 & 0.488235 & 0.641176 \\
\hline
\end{tabular}
\vspace{5pt}
\caption{Performance metrics for linear MP-FR, averaged over $10$ runs.}
\label{tab:linear}
\end{table}

\begin{table}[h]
\centering
\begin{tabular}{|l|c|c|c|}
\hline
Parameters & SE & SP & AUC \\
\hline
$\lambda_0=1$, $\lambda_1=1$, $\lambda_2=1$ & 0.6 & 0.482353 & 0.54902 \\
\hline
$\lambda_0=1$, $\lambda_1=1$, $\lambda_2=0.1$ & 0.6 & 0.488235 & 0.552941 \\
\hline
$\lambda_0=1$, $\lambda_1=1$, $\lambda_2=0.01$ & 0.6 & 0.488235 & 0.552941 \\
\hline
$\lambda_0=1$, $\lambda_1=0.1$, $\lambda_2=1$ & 0.633333 & 0.417647 & 0.492157 \\
\hline
$\lambda_0=1$, $\lambda_1=0.1$, $\lambda_2=0.1$ & 0.6 & 0.482353 & 0.54902 \\
\hline
$\lambda_0=1$, $\lambda_1=0.1$, $\lambda_2=0.01$ & 0.6 & 0.488235 & 0.552941 \\
\hline
$\lambda_0=1$, $\lambda_1=0.01$, $\lambda_2=1$ & 0.766667 & 0.270588 & 0.4 \\
\hline
$\lambda_0=1$, $\lambda_1=0.01$, $\lambda_2=0.1$ & 0.633333 & 0.411765 & 0.490196 \\
\hline
$\lambda_0=1$, $\lambda_1=0.01$, $\lambda_2=0.01$ & 0.6 & 0.482353 & 0.54902 \\
\hline
$\lambda_0=0.1$, $\lambda_1=1$, $\lambda_2=1$ & 0.6 & 0.482353 & 0.54902 \\
\hline
$\lambda_0=0.1$, $\lambda_1=1$, $\lambda_2=0.1$ & 0.6 & 0.488235 & 0.552941 \\
\hline
$\lambda_0=0.1$, $\lambda_1=1$, $\lambda_2=0.01$ & 0.6 & 0.488235 & 0.552941 \\
\hline
$\lambda_0=0.1$, $\lambda_1=0.1$, $\lambda_2=1$ & 0.633333 & 0.417647 & 0.492157 \\
\hline
$\lambda_0=0.1$, $\lambda_1=0.1$, $\lambda_2=0.1$ & 0.6 & 0.482353 & 0.54902 \\
\hline
$\lambda_0=0.1$, $\lambda_1=0.1$, $\lambda_2=0.01$ & 0.6 & 0.488235 & 0.552941 \\
\hline
$\lambda_0=0.1$, $\lambda_1=0.01$, $\lambda_2=1$ & 0.766667 & 0.270588 & 0.4 \\
\hline
$\lambda_0=0.1$, $\lambda_1=0.01$, $\lambda_2=0.1$ & 0.633333 & 0.411765 & 0.490196 \\
\hline
$\lambda_0=0.1$, $\lambda_1=0.01$, $\lambda_2=0.01$ & 0.6 & 0.482353 & 0.54902 \\
\hline
$\lambda_0=0.01$, $\lambda_1=1$, $\lambda_2=1$ & 0.6 & 0.482353 & 0.54902 \\
\hline
$\lambda_0=0.01$, $\lambda_1=1$, $\lambda_2=0.1$ & 0.6 & 0.488235 & 0.552941 \\
\hline
$\lambda_0=0.01$, $\lambda_1=1$, $\lambda_2=0.01$ & 0.6 & 0.488235 & 0.552941 \\
\hline
$\lambda_0=0.01$, $\lambda_1=0.1$, $\lambda_2=1$ & 0.633333 & 0.417647 & 0.492157 \\
\hline
$\lambda_0=0.01$, $\lambda_1=0.1$, $\lambda_2=0.1$ & 0.6 & 0.482353 & 0.54902 \\
\hline
$\lambda_0=0.01$, $\lambda_1=0.1$, $\lambda_2=0.01$ & 0.6 & 0.488235 & 0.552941 \\
\hline
$\lambda_0=0.01$, $\lambda_1=0.01$, $\lambda_2=1$ & 0.766667 & 0.270588 & 0.4 \\
\hline
$\lambda_0=0.01$, $\lambda_1=0.01$, $\lambda_2=0.1$ & 0.633333 & 0.411765 & 0.490196 \\
\hline
$\lambda_0=0.01$, $\lambda_1=0.01$, $\lambda_2=0.01$ & 0.6 & 0.482353 & 0.54902 \\
\hline
Aggregation & 0.8 & 0.441176 & 0.756863 \\
\hline
\end{tabular}
\vspace{5pt}
\caption{Performance metrics for quadratic MP-FR, averaged over $10$ runs.}
\label{tab:quad}
\end{table}

We can make the following important observations:
\begin{enumerate}
\item MP regularisation leads to better results than the single parameter counterpart both for linear and quadratic functional regression.

\item Aggregation is a reliable strategy to address the issue of dealing with multiple regularisation parameters and, especially in the quadratic case, significantly improves performance.

\item In the context of stenosis detection, SE is more important than SP, because it is less dangerous to misdetect a pathology than to misdetect its absence. From this viewpoint, in the present study the aggregation demonstrates an ability to stabilize performance of linear functional regression. Moreover, the results reported in Tables \ref{tab:linear} and \ref{tab:quad} for the aggregation clearly indicate that in terms of SE the quadratic approach outperforms its linear counterpart, that should not be always expected or taken for granted (see, e.g., \cite{horvath2013test}). 
\end{enumerate}

Let us conclude this section by comparing our results with some alternative approaches. The comprehensive survey \cite{kiricsli2013standardized} offers a thorough examination of algorithms designed for detecting stenosis based on vessel cross-section diameters, utilizing the same inputs as our considered methods. While the algorithms discussed in \cite{kiricsli2013standardized} were initially developed for coronary artery stenosis detection, they have the potential applicability to diagnose stenoses in various artery types, including ICA.

It seems that our algorithms demonstrates superior results compared to those reported in \cite{kiricsli2013standardized} (where the stated values were $SE = 0.55$ and $SP = 0.33$). At the same time, we would like to note that the application of polynomial functional regression to the problem of automatic stenosis detection from lumen diameters has been presented here for illustration purposes, and one may expect that nonlinear and non-polynomial functional regression methods may exhibit even better performance.



\section{Acknowledgements}
The research reported in this paper has been supported by the Federal Ministry for Climate Action, Environment, Energy, Mobility, Innovation and Technology (BMK), the Federal Ministry for Digital and Economic Affairs (BMDW), and the Province
of Upper Austria in the frame of the COMET–Competence Centers for Excellent Technologies Programme and the COMET Module S3AI managed by the Austrian Research Promotion Agency FFG. 

The data used in Section \ref{sec:empirical_stenosis} was acquired through the ReSect-study performed at the Medical University of Innsbruck. This study is funded by the OeNB Anniversary Fund (15644).

This work is additionally co-funded by the European Union (ERC, SAMPDE, 101041040). Views and opinions expressed are however those of the authors only and do not necessarily reflect those of the European Union or the European Research Council. Neither the European Union nor the granting authority can be held responsible for them.

We are grateful to the anonymous referees for their valuable suggestions, which have significantly improved our work.

\bibliographystyle{ws-aa}
\bibliography{multi_parameter_functional_aa}

\end{document}